\def\eqref#1{equation~\ref{#1}}
\def\floor#1{\lfloor #1 \rfloor}
\def\1{\bm{1}}
\DeclareMathAlphabet{\mathsfit}{\encodingdefault}{\sfdefault}{m}{sl}
\SetMathAlphabet{\mathsfit}{bold}{\encodingdefault}{\sfdefault}{bx}{n}
\newcommand{\method}{\textsc{GNP}\xspace}
\newcommand{\red}[1]{\textcolor{red}{#1}}
\newcommand{\blue}[1]{\textcolor{blue}{#1}}
\newcommand{\brown}[1]{\textcolor{brown}{#1}}
\theoremstyle{definition}
\newtheorem{proposition}{Proposition}
\newtheorem{theorem}{Theorem}
\newtheorem{lemma}{Lemma}
\title{Learning to Pool in Graph Neural Networks for Extrapolation}
\author{Jihoon Ko, Taehyung Kwon, Kijung Shin, Juho Lee\thanks{Corresponding Author.} \\
Kim Jaechul Graduate School of AI\\
KAIST\\
Republic of Korea \\
\texttt{\{jihoonko,taehyung.kwon,kijungs,juholee\}@kaist.ac.kr} \\
}
\begin{document}

\maketitle

\begin{abstract}
Graph neural networks (GNNs) are one of the most popular approaches to using deep learning on graph-structured data, and they have shown state-of-the-art performances on a variety of tasks. However, according to a recent study, a careful choice of pooling functions, which are used for the aggregation and readout operations in GNNs, is crucial for enabling GNNs to extrapolate. Without proper choices of pooling functions, which varies across tasks, GNNs completely fail to generalize to out-of-distribution data, while the number of possible choices grows exponentially with the number of layers. In this paper, we present GNP, a $L^p$ norm-like pooling function that is trainable end-to-end for any given task. Notably, GNP generalizes most of the widely-used pooling functions. We verify experimentally that simply using GNP for every aggregation and readout operation enables GNNs to extrapolate well on many node-level, graph-level, and set-related tasks; and GNP sometimes performs even better than the best-performing choices among existing pooling functions.

\end{abstract}

\section{Introduction}
\label{sec:intro}

Many real-world data, such as relationships between people in social networks or chemical bonds between atoms, can naturally be represented as graphs. Finding models with proper inductive biases to better describe such graph data has been a common goal for many researchers, and Graph Neural Networks (GNNs)~\citep{scarselli2009graph,kipf2017semi,hamilton2017inductive,velivckovic2017graph,xu2019powerful,maron2019provably,Xu2020Inductive} are considered to be the most successful model. They have proved effective for a variety of tasks, including recommendation \citep{ying2018graph}, drug discovery \citep{stokes2020deep}, and chip design \citep{mirhoseini2020chip}.

An important design choice for a GNN often overlooked is the specification of \emph{pooling functions}, the functions used for the aggregation or readout operation in GNNs.
They are usually required to be invariant w.r.t. the permutation of nodes in a graph, and common choices are element-wise summation (\texttt{sum}), maximum (\texttt{max}), minimum (\texttt{min}), or average (\texttt{mean}). Some recent works also proposed to use parametric models and learn them from data as well \citep{ying2018hierarchical,lee2019self,gao2019graph,yuan2020structpool}.

While most of the previous works on this line focused on improving predictive performance for their own tasks, recently, \citet{xu2021neural} studied the impact of the choice of pooling functions on the ability of a neural network to \emph{extrapolate}. Specifically, \citet{xu2021neural} highlighted the importance of the choice of pooling functions in order to make GNNs generalize over the data lying outside of the support of the training data distribution, and they argued that the specification of the pooling functions acts as an important inductive bias that can make GNNs either completely fail to extrapolate or gracefully generalize to out-of-distribution data. As a motivating example, consider the problem of counting the number of nodes in a graph. If we are to solve this problem with a single-layer GNN having one readout layer, probably the best pooling function would be \texttt{sum}, and the corresponding model will readily generalize to graphs with a much larger number of nodes than the ones seen during training. On the other hand, if we choose the pooling function as \texttt{max} instead, it may still fit the training data well but completely fail to predict the number of nodes in out-of-distribution graphs.

The findings in \citet{xu2021neural} raise a natural question; which pooling functions should be used for a given problem in order to make GNNs constructed with them successfully extrapolate for out-of-distribution data? \citet{xu2021neural} did not present any guide but empirically showed that we do have the ``right” pooling function for each problem tested, and when a pooling function is not properly selected, GNNs completely fails to extrapolate. The caveat here is that we do not know which pooling function is the right choice before actually training and validating the model.

To this end, in this paper, we present a generic learning-based method to find proper pooling functions for a given arbitrary problem. Our method, entitled Generalized Norm-based Pooling (GNP), formulates the pooling functions as a generic $L^p$ norm-like function (including negative $p$ as well), and learns the parameters inside the pooling functions in an end-to-end fashion. Unlike previous learning-based pooling methods that are usually tailored for specific tasks or focused on improving predictive performances, GNP can be applied to arbitrary tasks, and it improves the extrapolation ability of GNNs constructed with it. Also, GNP includes most of the pooling functions being used for GNNs as special cases. Despite the enhanced flexibility, GNP incurs minimal overhead in GNN in terms of the model complexity. A na\"ive application of GNP to GNNs is likely to fail because of some difficulty in training, so we propose a simple remedy to this. Using nine graph-level, node-level, and set-related tasks, we demonstrate that GNNs with GNP trained by our training scheme extrapolate 
for out-of-distribution data comparably and sometimes even better than those with pooling functions that are carefully chosen among of widely-used ones. We summarize our contributions as follows:
\begin{itemize}[leftmargin=10pt,noitemsep,topsep=0pt]
    \item {\bf Generalized pooling function}: We propose GNP, a simple yet flexible pooling function that can readily be applied to arbitrary tasks involving GNNs, with minimal parameter overhead. 
    \item {\bf Effective training methods}: We propose effective training methods for GNP.
    \item {\bf Extensive experiments}: We empirically demonstrate that GNNs with GNP can indeed generalize to out-of-distribution data on nine tasks.
\end{itemize}

\section{Related work}
\label{sec:related}

\paragraph{Aggregation functions}
Various aggregation functions have been appeared to enhance the performance of GNNs.
\cite{hamilton2017inductive} proposed GraphSAGE with four different aggregation methods; \texttt{max}, \texttt{mean}, GCN~\citep{kipf2017semi}, and LSTM~\citep{HochSchm97}. 
\cite{velivckovic2017graph} proposed Graph Attention neTworks (GATs) including attention-based aggregation functions~\citep{vaswani2017attention}.
\cite{xu2019powerful} proposed Graph Isomorphism Networks (GINs) and proved that GNN can satisfy the 1-Weisfeiler-Lehman (WL) condition
only with \texttt{sum} pooling function as aggregation function.
Recently, \cite{li2020deepergcn} proposed a trainable softmax and power-mean aggregation function that generalizes basic operators. 
Compared to these methods designed to improve interpolation performance on specific tasks, ours can improve extrapolation performance for generic tasks.

\paragraph{Readout functions}
\cite{zhang2018end} suggested SortPooling that chooses top-$k$ values from the sorted list of the node features to construct outputs.
Another popular idea is hierarchical pooling, where outputs are obtained by iteratively coarsening nodes in graphs in a hierarchical fashion~\citep{ying2018hierarchical, gao2019graph,lee2019self,yuan2020structpool}. Although demonstrated to be effective for the tasks they have been designed for, most of these methods require heavy computation and it is not straightforward to extend them for aggregation functions. On the other hand, our GNP can be applied to both aggregation and readout functions with minimal overhead.

\paragraph{Pooling functions in generic context}
\cite{vinyals2015order} proposed Set2Set to get a representation of set-structured data with a LSTM-based pooling function. \cite{lee2019set} proposed to use an attention-based pooling function to get summaries of set data. 
For convolutional neural networks, there were some approaches to generalize average pooling and max pooling widely used for many neural network architectures.
\cite{gulcehre2014learned} proposed a normalized learnable $L^p$ norm function that generalizes average pooling and max pooling. 
\cite{lee2016generalizing} further extended those pooling functions with learnable tree-structured pooling filters.

\paragraph{Norm-based pooling functions}
There have been several works to employ norm-based pooling functions. \citet{gulcehre2014learned} proposed a learnable $L^p$ norm function of the form
\[
    f(\mathbf{v}) = \left(\frac{1}{|\mathbf{v}|}\sum_{i=1}^{|\mathbf{v}|} |v_i|^p\right)^{1/p}
\]
to substitute max pooling or average pooling used in convolutional neural networks. Similar norm-based pooling functions were used for acoustic modeling~\citep{swietojanski2016differentiable} and text representation~\citep{wu2020attentive}. Compared to GNP, these pooling methods cannot express the \texttt{sum} pooling. \citet{li2020deepergcn} further generalized this by multiplying $|\mathbf{v}|^q$ to include \texttt{sum} pooling as well, but not considered the case where $p$ is positive and the case where $p$ is negative at the same time. GNP is the most generic norm-based pooling function, compared to all aforementioned approaches, and more importantly, no other works studied their usefulness in the context of learning to extrapolate.

\paragraph{Extrapolation}
\cite{trask2018neural} pointed out that most of the feed-forward neural networks fail to extrapolate even for the simplest possible identity mapping, and suggested using alternative computation units mimicking the behavior of arithmetic logic units. The ability to extrapolate is also important in the GNN context, for instance, many combinatorial optimization problems involving graphs often require extrapolation. \cite{selsam2018learning, prates2019learning} tackled the extrapolation problem by performing large iterations of message passing. Using various classical graph algorithms, \cite{velickovic2020Neural} showed that the extrapolation performance of GNNs depends heavily on the choice of the aggregation function. Similarly, \cite{xu2021neural} demonstrated that choosing the right non-linear function for both MLPs and GNNs is crucial for the extrapolation.

\section{Main Contribution: Generalized Norm-based Pooling}
\label{sec:method}
In this section, we present our Generalized Norm-based Pooling (GNP) and discuss its expressiveness. 
Then, we describe some difficulties in training GNP and our remedy. Lastly, we present a task on which a GNN with \method can extrapolate, while that equipped with the basic pooling functions  cannot.

\subsection{Generalization of Basic Pooling Functions}

While \method is motivated by the $L^p$-norm function, which includes the \texttt{sum} and \texttt{max} functions as special cases, %
further ingredients are added to make \method more flexible than the $L^p$-norm function. Specifically, we allow $p$ to be negative to let \method express a wider class of functions than the previous norm-based or learning-based pooling functions.

Let $\mathbf{V} = \{\mathbf{v}_i\}_{i=1}^n$ be a set of node features with $\mathbf{v}_i \in \mathbb{R}^d$ for $i=1,\dots, n$. We define GNP to be an element-wise function where the output for each $j$th element is
\[\label{eq:gnp}
\method_j(\mathbf{V}) = \frac{1}{n^q} \left(
\sum_{i=1}^n |v_{i,j}|^p
\right)^{1/p},
\]
where $p \in \mathbb{R}\setminus\{0\}$ and $q \in \mathbb{R}$ are learnable parameters.
\method includes the basic pooling functions (\texttt{sum}, \texttt{mean}, \texttt{max}, and \texttt{min}) as special cases.

\begin{proposition}
\label{proposition:generalizing}
Suppose all the entries of $\mathbf{v}$ are non-negative in \eqref{eq:gnp}. Then, GNP includes \texttt{sum, max, min} as special cases. If we further restrict $\mathbf{v}$ to be positive, GNP includes \texttt{min}.
\end{proposition}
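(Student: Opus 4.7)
The plan is to verify the four identifications one pooling function at a time, by exhibiting a specific choice of the two scalar parameters $(p,q)$ (or a limit of such choices) for which \eqref{eq:gnp} reduces to the target function. Throughout, the non-negativity assumption lets me drop the absolute values, so $|v_{i,j}|^p=v_{i,j}^p$ whenever $v_{i,j}>0$.

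First I would handle the two algebraically exact cases. For \texttt{sum}, take $p=1$ and $q=0$: then $\method_j(\mathbf{V})=\sum_{i=1}^n v_{i,j}$. For \texttt{mean}, take $p=1$ and $q=1$: then $\method_j(\mathbf{V})=\tfrac{1}{n}\sum_{i=1}^n v_{i,j}$. (I would also remark that \texttt{mean} is implicit in the proposition even though the statement omits it; it is subsumed since $(p,q)=(1,1)$ works.)

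Next, the two limiting cases. For \texttt{max}, fix $q=0$ and let $p\to\infty$; the standard $L^p$-to-$L^\infty$ fact gives
\[
\left(\sum_{i=1}^n v_{i,j}^{p}\right)^{1/p}\;\xrightarrow[p\to\infty]{}\;\max_{1\le i\le n} v_{i,j},
\]
which I would justify by the two-sided bound $\max_i v_{i,j}\le(\sum_i v_{i,j}^p)^{1/p}\le n^{1/p}\max_i v_{i,j}$ and sending $p\to\infty$. For \texttt{min}, I need the strict positivity hypothesis to avoid $0^p=\infty$ when $p<0$; with $v_{i,j}>0$, setting $q=0$ and letting $p\to-\infty$ yields
\[
\left(\sum_{i=1}^n v_{i,j}^{p}\right)^{1/p}\;\xrightarrow[p\to-\infty]{}\;\min_{1\le i\le n} v_{i,j},
\]
which follows from the previous limit applied to the reciprocals $1/v_{i,j}$ together with the identity $(\sum_i v_{i,j}^p)^{1/p}=\bigl((\sum_i (1/v_{i,j})^{-p})^{1/(-p)}\bigr)^{-1}$ and $\min_i v_{i,j}=1/\max_i(1/v_{i,j})$.

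The only real subtlety is the \texttt{min} case: a zero entry makes $v_{i,j}^p$ undefined (or $+\infty$) for $p<0$, which is why the proposition tightens the hypothesis to $\mathbf{v}>0$ there. I would flag this explicitly and note that the other three reductions are unaffected by zero entries. The remaining work is routine limit bookkeeping, so the main obstacle is simply presenting the $p\to\pm\infty$ arguments cleanly and being careful that ``special case'' is understood in the closure sense (limits of admissible parameters) rather than at literally admissible values of $p$.
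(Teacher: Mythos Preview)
Your proposal is correct and follows essentially the same route as the paper: exhibit $(p,q)=(1,0)$ for \texttt{sum}, $(p,q)=(1,1)$ for \texttt{mean}, and take $q=0$ with $p\to\pm\infty$ for \texttt{max}/\texttt{min}. You simply fill in details the paper leaves implicit (the sandwich bound for the $p\to\infty$ limit and the reciprocal trick for $p\to-\infty$), and you correctly flag why strict positivity is needed for the \texttt{min} case.
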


\begin{proof}
$\method_j(\mathbf{V})$ is equivalent to elementwise \texttt{sum} when $(p,q)=(1,0)$ and elementwise \texttt{mean} when $(p,q)=(1,1)$. When $q=0$, we have
\begin{align}
\lim_{p\to\infty} \method_j(\mathbf{V}) = \max_i v_i \lim_{p\to\infty} \left( \sum_{i=1}^{n}\left(\frac{|v_{i,j}|}{\max_i |v_{i,j}|}\right)^p\right)^{1/p} = \max_i |v_{i,j}| \cdot 1 = \max_i |v_{i,j}|,    
\end{align}
so GNP converges to \texttt{max}. Similarly, we can obtain \texttt{min} as a limit for $p \to -\infty$. \qedhere

\end{proof}

\subsection{\texorpdfstring{Handling of Negative $p$}{Handling negative p}}
The GNP function in \eqref{eq:gnp} is not continuous and even not defined at $p=0$. Hence, directly learning GNP in the original form as in \eqref{eq:gnp} (even with $p=0$ ignored) can cause instability, especially when an algorithm is trying to move from a positive $p$ value to a negative $p$ value. Instead, we suggest splitting the GNP function into two parts, $\method^+$ with positive $p$ and $\method^-$ with negative $p$, and let the model choose the right balance between them. Specifically, define
\begin{align}
\method^+_j(\mathbf{V}) = \frac{1}{n^{q^{+}}}\left(\sum_{i=1}^{n} |v_{i,j}|^{p^{+}}\right)^{1/p^{+}}, \quad 
\method^-_j(\mathbf{V}) = \frac{1}{n^{q^{-}}}\left(\sum_{i=1}^{n} |v_{i,j}|^{-p^{-}}\right)^{-1/p^{-}}, 
\end{align}
where $p^+>0$, $q^+$, $p^->0$, and $q^-$ are learnable parameters. Given a set of node features $\mathbf{V}$, we first split the feature dimension into two, and compute the output from $\method^+$ for the first half and from $\method^-$ for the second half. Then we mix two outputs with a single linear layer to get the final output.
\begin{align}
\mathbf{y} & = \begin{bmatrix}     
\method^+_1(\mathbf{V}) & \dots & \method^+_{\floor{d/2}}(\mathbf{V}) & 
\method^-_{\floor{d/2}+1}(\mathbf{V}) & \dots & \method^-_d(\mathbf{V})
\end{bmatrix}, \\
\method(\mathbf{V}) & = \mathbf{W}\mathbf{y} + \mathbf{b},
\end{align}
where {$\floor\cdot$ is the floor function, }$\mathbf{W}\in \mathbb{R}^{d\times d}$ and $\mathbf{b} \in \mathbb{R}^d$ are additional parameters to be learned. With this design, GNP can easily switch between positive $p$ and negative $p$, choosing proper values according to tasks. %

\subsection{Stabilization of Training Processes}
Unfortunately, even with the above design to split the positive and negative parts, GNP still suffers from a training instability issue. In this section, we introduce our remedy for such an issue. With our remedy, as we will empirically demonstrate, GNP can be applied to arbitrarily complex deep GNNs as a drop-in replacement for the existing pooling functions.

\paragraph{Negative or near-zero inputs}
GNP first processes inputs to be non-negative values by taking absolute values. In practice, in many GNN architectures, inputs are passed through ReLU before being fed into the pooling functions, so in such a case, we do not explicitly take the absolute values. If not, we explicitly put the ReLU activation function before every GNP to make inputs non-negative.

For the positive part of GNP, when the inputs are close to zero, the gradient w.r.t. the parameter $p^+$ may be exploded, as one can see from the following equation.
\begin{align*}
       \frac{\partial \method^+_j(\mathbf{V})}{\partial p^+} = 
       \frac{\method^+_j(\mathbf{V})}{p^+} \left( -\log(\method_j^+(\mathbf{V})) + \frac{\sum_{i=1}^{n} v_{i,j}^{p^+} \log(v_{i,j}) }{\method^+_j(\mathbf{V})^{p^+}} \right).
\end{align*}
Hence, we add a small tolerance term $\epsilon$ to every input element to prevent gradient explosion. This works well for positive $p$, but we need more care for negative $p$. When $p$ is negative, even small $\epsilon$ can be amplified by the term $(v_{i,j} + \epsilon)^{p^-}$ to dominate the other values. Hence, when a specific input $v_{i,j}$ is smaller than $\epsilon$, we replace it with $1/\epsilon$ to mask out the effect of that input for the output computation. An exceptional case is when every input element is below $\epsilon$. For such a case, we fix the output of GNP to be zero by default.
\begin{align*}
\tilde v_{i,j} &= \left\{
    \begin{array}{ll} 
     v_{i,j} + \epsilon & \text{ if } v_{i,j} > \epsilon \\
     1/\epsilon & \text{ otherwise }
\end{array} \right., \\
\method_j^-(\tilde{\mathbf{V}}) &=
\left\{
\begin{array}{ll}
0 & \text{ if } v_{i,j} < \varepsilon \text{ for } i=1,\dots, n \\
\frac{1}{n^{q-}}\left( \sum_{i=1}^n {\tilde v_{i,j}}^{p^-} \right)^{1/p^-} & \text{ otherwise }
\end{array}.
\right.
\end{align*}

Even with these treatments, still, the algorithm can diverge especially when $p$ is large. To resolve this, we clipped all $p$ values to be contained in $[0, 50]$ and used the log-sum-exp trick. That is, %
\[
\mathrm{GNP}_j^{p^+}(\mathbf{V}) = \frac{1}{n^{q^+}}\exp \left(
\frac{1}{p^+} \log \left( \sum_{i=1}^n \exp( p^+\log (v_{i,j} + \epsilon))
\right)
\right).
\]
Also, similar to \citet{gulcehre2014learned}, we reparameterized $p^+$ and $p^-$ with the softplus activation function, i.e., $p^+ = 1 + \log(1 + \exp(t^+))$ for some $t^+ \in \mathbb{R}$. 

Another important trick was to use different learning rates for training $(p^+, p^-)$ and $(q^+, q^-)$. Since the parameters $(q^+, q^-)$ have much larger impact on the GNP, if we use the same learning rates for  $(p^+, p^-)$ and $(q^+, q^-)$, the model can converge to unwanted local minimum that are not faithfully tuned for $(p^+, p^-)$. Hence, we used larger learning rates for $(p^+, p^-)$ to balance training. %

\subsection{Extrapolation Ability of \method}
\label{sec:method:analysis}

As stated in Theorem~\ref{thm:informal},
we prove that a GNN equipped with \method can extrapolate on the \texttt{harmonic} task, which we define in Section \ref{sec:exp:graph}. However, that equipped with the basic pooling functions cannot extrapolate on the task, as we show empirically in Section~\ref{sec:exp:graph} and theoretically in Appendix \ref{sec:app:theory}. %

\begin{theorem}
\label{thm:informal}
(Informal) Assume all the nodes in $G$ have the same scalar feature $1$. Then, a one-layer GNN equipped with \method and trained with squared loss in the NTK regime learns the \texttt{harmonic} task function, and thus it can extrapolate.
\end{theorem}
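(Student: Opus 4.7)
My plan is to prove the theorem in two stages: a \emph{representability} step showing that the GNN-with-\method hypothesis class contains the \texttt{harmonic} target function $f^\star$ on every graph (not just on the training range), and a \emph{convergence} step invoking the NTK-regime training dynamics to conclude that gradient flow actually finds $f^\star$. From the name of the task and the motivating role of the negative branch, I expect $f^\star$ to reduce, under the all-ones feature assumption, to a reciprocal-sum of the form $c\bigl(\sum_{i=1}^{n} w_i\bigr)^{-1}$ for simple $w_i$ (e.g.\ functions of the degree sequence produced by one message-passing step); the exact $w_i$ will be lifted from Section~\ref{sec:exp:graph}, but the skeleton of the proof does not depend on the specific form.

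For representability, I would exhibit concrete values of $(p^-, q^-)$ and of the linear readout $(\mathbf{W}, \mathbf{b})$ such that the output of the one-layer GNN-with-\method equals $f^\star(G)$ identically. The one message-passing layer, applied to an all-ones input, produces per-node scalars that depend on $G$ only through local degree information; feeding these into $\method^-_j(\mathbf{V}) = \bigl(\sum_i v_{i,j}^{-p^-}\bigr)^{-1/p^-}$ with $p^- = 1$ and suitable $q^-$ gives exactly the reciprocal-sum needed for $f^\star$, up to a size-independent affine combination which is absorbed by $\mathbf{W}$ and $\mathbf{b}$. This step is essentially an extension of Proposition~\ref{proposition:generalizing} from limit-pooling to explicit harmonic-pooling, and it is the step that distinguishes \method from the basic pooling functions, whose closed forms under the all-ones input are polynomials (or constants) in $n$ and therefore cannot match a reciprocal-sum.

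For convergence, I would appeal to the linearization argument of \citet{xu2021neural} for GNNs in the NTK regime: infinitely wide training with squared loss converges to kernel regression against the architecture's neural tangent kernel, so it suffices to show that $f^\star$ lies in the associated RKHS. I would do this by linearizing \method around initialization in the parameters $(p^-, q^-, \mathbf{W}, \mathbf{b})$ and verifying that the first-order perturbation pointing from the initial values towards the representability-step values induces an output change proportional to $f^\star(G)$ on every graph. Because that tangent-space direction depends on $G$ only through the harmonic quantity itself, the NTK interpolator agrees with $f^\star$ pointwise, including at graph sizes outside the training support---which is exactly the extrapolation claim.

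The hard part is this last step. The tangent kernel of a GNN with trainable pooling parameters is non-standard, and I expect the delicate work to be verifying that the Jacobian columns associated with $(p^-, q^-)$ are non-degenerate at initialization and not collinear with the columns coming from ordinary linear weights, so that the minimum-norm interpolator actually picks out the harmonic direction rather than a polynomial-in-$n$ surrogate which only happens to fit the training range. I would carry this out concretely by plugging in the explicit derivative $\partial\method^-_j/\partial p^-$ (analogous to the formula displayed for $\partial\method^+_j/\partial p^+$ in the paper), evaluating it at the all-ones input and at the initialization, and checking full-rankness of the induced feature map on the training graphs. The contrast result for basic pooling, deferred to Appendix~\ref{sec:app:theory} as the excerpt already indicates, then follows by observing that their tangent kernels span only size-polynomial functions of $n$ and so do not contain $f^\star$.
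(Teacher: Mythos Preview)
Your representability step is on target and mirrors what the paper actually does implicitly: with $(p^+,p^-,q^+,q^-)=(\infty,1,\infty,0)$ the $\method^+$ branch vanishes and $\method^-$ applied to the per-node degrees produced by one sum-aggregation step is exactly $\bigl(\sum_{v\in V}\deg(v)^{-1}\bigr)^{-1}=f^\star(G)$. So far, fine.

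The convergence step, however, diverges from the paper's argument and contains a real gap. The paper does \emph{not} linearize in the pooling parameters. It \emph{fixes} $(p^+,p^-,q^+,q^-)$ at the ideal values above and computes the GNTK only with respect to the linear weights $W^{(1)},W^{(2)}$. With the pooling parameters frozen at those values, the first coordinate block of the induced feature map $\phi(G)$ is literally $\method(\{\bm w^\top \bm h_v\}_{v\in V})$, which under the all-ones assumption equals $f^\star(G)$ up to the sign of $\bm w$. The paper then writes down the min-norm constrained problem (their Lemma~1, restating \citet{xu2021neural}), solves the KKT system explicitly, and reads off that the kernel-regression solution is $f^\star(G)$ for every $G$. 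The argument works precisely because $f^\star$ already sits as a coordinate of $\phi(G)$ once the pooling exponents are set correctly.

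Your plan instead linearizes at a \emph{generic} initialization and includes $(p^-,q^-)$ among the NTK parameters. Two things break. First, the directional-derivative claim ``the first-order perturbation from $\theta_0$ toward $\theta^\star$ induces an output change proportional to $f^\star(G)$'' is not what RKHS membership means, and it is generically false here: $\method^-$ is highly nonlinear in $p^-$, so $\nabla_\theta f(\theta_0,G)\cdot(\theta^\star-\theta_0)\neq f(\theta^\star,G)-f(\theta_0,G)$. At initialization $p^-\approx 1+\log 2$, the feature $\partial\method^-/\partial p^-$ evaluated on degree sequences is not a scalar multiple of the harmonic target, and neither is any finite linear combination with the $W$-derivatives; you would be trying to express $\bigl(\sum_v d_v^{-1}\bigr)^{-1}$ as a combination of $\bigl(\sum_v d_v^{-p_0}\bigr)^{-1/p_0}$ and its $p$-derivative, which fails on graphs with distinct degree multisets. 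Second, even if you could place $f^\star$ in the span, your non-degeneracy check on Jacobian columns does not certify that the \emph{min-norm} interpolator selects $f^\star$ rather than some other RKHS element that happens to match the training labels; the paper closes this by an explicit KKT computation that you have no analogue of.

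In short: the paper's proof is weaker than what you are attempting (it assumes the pooling exponents are already at the ideal values and only trains the linear layers in the NTK sense), but that weakening is exactly what makes the feature-map/KKT calculation go through. Your more ambitious route of training $(p,q)$ in the NTK regime would require showing $f^\star$ lies in the tangent space at a non-ideal initialization, and that is not true for this architecture.
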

\begin{proof}
 See Appendix \ref{sec:app:theory} for detailed analysis.
\end{proof}

\section{Experiments}
\label{sec:exp}
In this section, we review our experiments on various extrapolation tasks. %

\subsection{Experimental Setups}
\label{sec:exp:setups}

\paragraph{Machines} We performed all experiments on a Linux server with RTX 3090 GPUs. 

\paragraph{GNN models} 
For graph-level tasks, 
we used one GIN \citep{xu2019powerful} layer with a hidden dimension of $32$ and two FC layers as MLP, and we fed only the outputs of the GIN layer into the readout function. 
Note that this simple model is expressive enough for obtaining exact answers to all considered graph-level tasks.
For node-level tasks, we used three of the aforedescribed GIN layers, without readout functions, so that nodes at most three hops away from the target node can be taken into consideration.
For set-level tasks, we used one FC layer with a hidden dimension of $32$ before the pooling function and used another FC layer for the final output after the pooling function.

\paragraph{Baseline}
Commonly for all tasks, we considered \texttt{sum}, \texttt{max}, \texttt{mean}, and \texttt{min}, all of which are generalized by \method, as baseline aggregation and/or readout functions.
For graph-level tasks, we additionally considered SortPooling \citep{zhang2018end} with $k=20$ and Set2Set \citep{vinyals2015order} as baseline readout functions, and we considered the hierarchical pooling version of SAGPool \citep{lee2019self} as a whole as a baseline model.
For set-level tasks, we additionally considered Set2Set \citep{vinyals2015order} as a baseline pooling function and Set Transformer \citep{lee2019set} as a whole as a baseline model.

\paragraph{Evaluation}
We compared evaluation metrics on the test set when validation loss was minimized, and in each setting, we reported mean and standard deviation over $5$ runs, unless otherwise stated.

\begin{figure}
    \vspace{-2mm}
    \centering
    \includegraphics[width=0.8\linewidth]{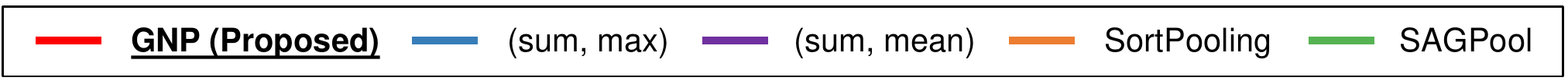}
    \\
    \begin{subfigure}{0.33\linewidth}
        \includegraphics[width=\linewidth]{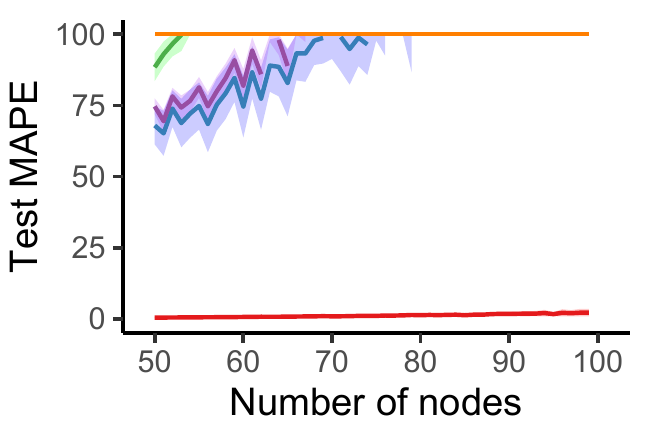}
        \caption{\texttt{invsize}}
    \end{subfigure}
    \hspace{-1.5mm}
    \begin{subfigure}{0.33\linewidth}
        \includegraphics[width=\linewidth]{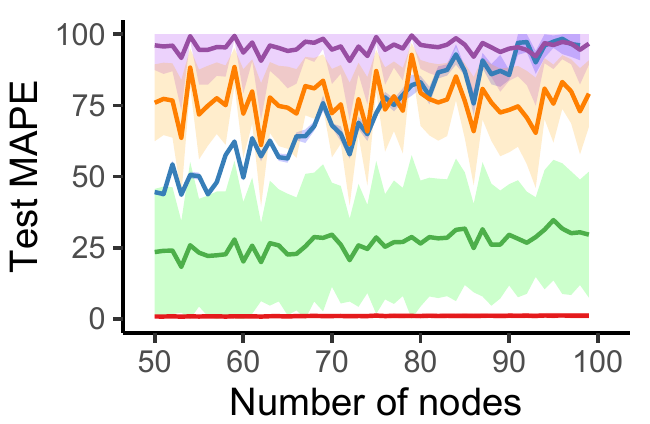}
        \caption{\texttt{harmonic}}
    \end{subfigure}
    \hspace{-1.5mm}
    \begin{subfigure}{0.33\linewidth}
        \includegraphics[width=\linewidth]{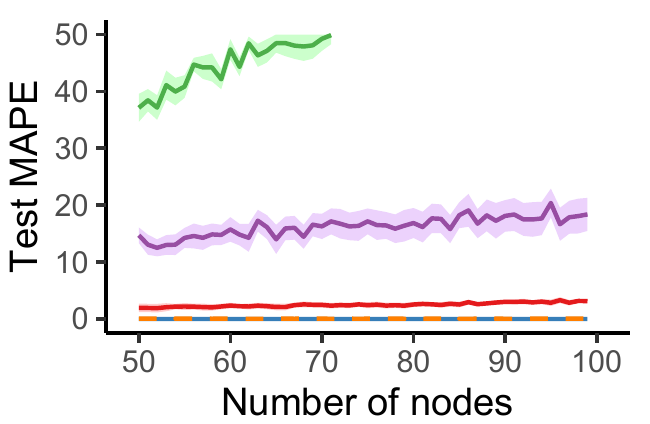}
        \caption{\texttt{maxdegree}}
    \end{subfigure}
    \caption{Extrapolation performances depending on the number of nodes in test graphs on three tasks (\texttt{invsize}, \texttt{harmonic}, and \texttt{maxdegree}). 
    Only GIN equipped with \method performed consistently well on all the tasks.
    We tested $19$ competitors and reported the results of the most successful ones.
    } %
    \label{fig:graph_ext}
\end{figure}

\begin{table}
  \vspace{-2mm}
  \caption{Extrapolation performance in terms of MAPE on large graphs with different structures.
  On two tasks (\texttt{invsize} and \texttt{harmonic}), \method significantly outperformed the second best one.
  }%
  \centering
  \scalebox{0.85}{
  \begin{tabular}{c|cccccc}
    \toprule
    \multirow{3}{*}{Types} & \multicolumn{2}{c}{\texttt{invsize}} & \multicolumn{2}{c}{\texttt{harmonic}} & \multicolumn{2}{c}{\texttt{maxdegree}} \\
    \cmidrule(r){2-7}
    & \multirow{2}{*}{\method} & Best Baseline & \multirow{2}{*}{\method} & Best Baseline & \multirow{2}{*}{\method} & Best Baseline \\
    & & (\texttt{sum}, \texttt{max}) & & (SAGPool) & & (\texttt{sum}, \texttt{max}) \\
    \midrule
    \texttt{BA}  & \textbf{0.9$\pm$0.3} & 92.5$\pm$10.5 & \textbf{2.5$\pm$0.9} & 78.4$\pm$40.8 & \underline{2.1$\pm$1.1} & \textbf{0.0$\pm$0.0} \\
    \texttt{Expander}  & \textbf{1.9$\pm$1.0} & \underline{35.4$\pm$7.8} & \textbf{0.9$\pm$0.5} & \underline{11.9$\pm$18.6} & \underline{2.3$\pm$1.1} & \textbf{0.0$\pm$0.0} \\
    \texttt{4regular} & \textbf{0.8$\pm$0.3} & 205.6$\pm$36.8 & \textbf{1.9$\pm$1.3} & 1179.3$\pm$310.6 & \underline{3.4$\pm$3.7} & \textbf{0.0$\pm$0.0} \\
    \texttt{Tree} & \textbf{0.8$\pm$0.3} & 202.3$\pm$11.9 & \textbf{14.7$\pm$6.3} & 149.4$\pm$34.9 & \underline{1.9$\pm$0.6} & \textbf{0.0$\pm$0.0} \\
    \texttt{Ladder} & \textbf{0.8$\pm$0.3} & 195.4$\pm$53.6 & \textbf{2.4$\pm$2.4} & 1138.4$\pm$283.3 & \underline{30.7$\pm$16.9} & \textbf{0.1$\pm$0.1} \\
    \bottomrule
  \end{tabular}
  \label{tab:exp:graph_diff}
  }
\end{table}

\subsection{Extrapolation Performances on Graph-level Tasks}
\label{sec:exp:graph}

In this section, we consider three graph-level tasks. Given a graph, the first task is to find the maximum node degree (\texttt{maxdegree}), and the second task is to compute the harmonic mean node degree divided by the number of nodes (\texttt{harmonic}). %
The last task is to compute the inverse of the number of nodes (\texttt{invnode}), which does not depend on the topology of the given graph. For details of the synthetic datasets we used, see Appendix \ref{sec:app:details:graph}.

For \texttt{maxdegree}, whose objective is $\textcolor{red}{\max}_{v\in V}\left(\textcolor{blue}{\sum}_{u\in N(v)}1\right)$, where $N(v)$ is the set of neighbors of $v$, the reasonable choice is to use \blue{\texttt{sum}} and \red{\texttt{max}} as aggregation and readout functions, respectively,
For \texttt{harmonic}, whose objective is $(\textcolor{brown}{\sum}_{v\in V}(\textcolor{blue}{\sum}_{u\in N(v)}1)^{\textcolor{brown}{-1}})^{\textcolor{brown}{-1}}$
the reasonable combination of aggregation and readout functions are \blue{\texttt{sum}} and \brown{\method~with $(p, q) = (-1, 0)$}, respectively.
For \texttt{invnode}, whose objective is $(\textcolor{brown}{\sum}_{v\in V}1^{\textcolor{brown}{-1}})^{\textcolor{brown}{-1}}$, any of \texttt{mean}, \texttt{max}, and \texttt{min} is reasonable as the aggregation function, and  \brown{\method~with $(p, q) = (-1, 0)$} is reasonable as the readout function.

We trained all models for $200$ epochs, %
and we compared their test MAPE\footnote{MAPE scales the error by the actual value, and it has been considered as a proper measure of extrapolation performance \citep{xu2021neural}.}  for evaluation in Figure~\ref{fig:graph_ext}. 
GIN with \method showed near-perfect extrapolation performances on all three tasks, and especially for \texttt{harmonic} and  \texttt{invnode}, GIN with \method was the only successful model.
Among the $16$ combinations of \texttt{sum}, \texttt{max}, \texttt{mean}, and \texttt{min}, using \texttt{sum} and \texttt{max} as the aggregation and readout functions, respectively, showed near-perfect extrapolation performance on \texttt{maxdegree}.
For the same task, another combination $(\texttt{mean}, \texttt{max})$ showed reasonably good performance.
For the other tasks, however, none of the $16$ combinations was successful.
SortPool and Set2Set as the readout function were tested, while fixing the aggregation function to the aforementioned reasonable one for each task.
While they performed almost perfectly for \texttt{maxdegree}, they failed at the other tasks.  
Lastly, SAGPool was not successful in any of the tasks.

We also tested the extrapolation performance  using large test graphs with distinctive structures.
As seen in Table~\ref{tab:exp:graph_diff}, GIN with \method showed near-perfect performance only except for \texttt{harmony} on random trees, and \texttt{maxdegree} on ladder graphs.
Especially, on \texttt{invsize} and \texttt{harmony}, it significantly outperformed the best baseline. We further tested the extrapolation performance of GNP and the baseline approaches using real-world graphs in Appendix \ref{sec:app:exp:realgraph}, graphs with different structures in Appendix \ref{sec:app:exp:structures}, graphs with different node feature distributions in Appendix \ref{sec:app:exp:nodefeats}, and various activation functions in  Appendix \ref{sec:app:exp:activations}.

\begin{table}
  \vspace{-2mm}
  \caption{Extrapolation performances in terms of MAE on two node-level tasks (\texttt{shortest} and \texttt{bfs}). GNP and all baseline methods were near perfect on \texttt{bfs}, and GNP was second best on \texttt{shortest}.}
  \centering
  \begin{subtable}{\linewidth}
      \centering
      \scalebox{0.85}{
      \begin{tabular}{c|ccccc}
        \toprule
        Aggregation & \texttt{sum} & \texttt{max} & \texttt{mean} & \texttt{min} & \method \\
        \midrule
        \texttt{bfs}       & \textbf{0.000$\pm$0.000} & \textbf{0.000$\pm$0.000} & \textbf{0.000$\pm$0.000} & \textbf{0.000$\pm$0.000} & \textbf{0.000$\pm$0.001} \\
        \texttt{shortest}  & 1.323$\pm$0.162 & 0.762$\pm$0.395 & 1.316$\pm$0.330 & \textbf{0.141$\pm$0.007} & \underline{0.332$\pm$0.105} \\
        \bottomrule
      \end{tabular}
      }
      \caption{Extrapolation Performance on Large Graphs with Homogeneous Structures.}
  \end{subtable}
  \begin{subtable}{\linewidth}
      \centering
      \scalebox{0.85}{
      \begin{tabular}{c|cc|ccc}
        \toprule
        \multirow{2}{*}{Types} & \multicolumn{2}{c|}{\texttt{bfs}} & \multicolumn{3}{c}{\texttt{shortest}}\\
        \cmidrule(r){2-6}
        & \method & \texttt{max} & \method & \texttt{min} & \texttt{max} \\
        \midrule
        \texttt{BA}  & \underline{0.001$\pm$0.001} & \textbf{0.000$\pm$0.000} & \underline{0.546$\pm$0.168} & \textbf{0.275$\pm$0.015} & 1.268$\pm$0.642 \\
        \texttt{Expander} & \textbf{0.000$\pm$0.000} & \textbf{0.000$\pm$0.000} & \underline{0.159$\pm$0.068} & \textbf{0.019$\pm$0.004} & 0.334$\pm$0.225 \\
        \texttt{4regular} & \underline{0.003$\pm$0.003} & \textbf{0.000$\pm$0.000} & \underline{1.911$\pm$0.257} & \textbf{1.188$\pm$0.182} & 5.178$\pm$1.218 \\
        \texttt{Tree} & \underline{0.003$\pm$0.002} & \textbf{0.000$\pm$0.000} & \underline{1.579$\pm$0.289} & \textbf{1.057$\pm$0.256} & 4.584$\pm$0.980 \\
        \texttt{Ladder} & \underline{0.002$\pm$0.001} & \textbf{0.000$\pm$0.000} & \underline{1.217$\pm$0.278} & \textbf{0.701$\pm$0.160} & 3.400$\pm$1.056 \\
        \bottomrule
      \end{tabular}
      }
      \caption{Extrapolation Performance on Large Graphs with Heterogeneous Structures.}
  \end{subtable}
  \label{tab:exp:node}
  \vspace{-2mm}

\end{table}

\subsection{Extrapolation Performance on Node-level Tasks}
\label{sec:exp:node}

We further evaluated the extrapolation performance of GNP on two node-level tasks considered in \cite{velickovic2020Neural}.
The first task is to decide whether each node is within $3$ hops from the target node or not. (\texttt{bfs}). We formulate the task as a regression problem and the label is $1$ within $3$ hops and $0$ outside $3$ hops.
The second task is to find the minimum distance from each node to the target node on a graph with non-negative weights (\texttt{shortest}).
Only the nodes within $3$ hops from the target node were taken into consideration.
As discussed in \citep{velickovic2020Neural}, one of the optimal models for the tasks imitates the parallel breadth-first search and the parallel Bellman-Ford algorithm \citep{bellman1958routing} for \texttt{bfs} and \texttt{shortest}, respectively. In such cases, the reasonable aggregators for \texttt{bfs} and \texttt{shortest} are \texttt{max} and \texttt{min}, respectively.

We considered five GINs equipped with \texttt{sum}, \texttt{max}, \texttt{mean}, and \texttt{min}, and GNP, respectively, as aggregation functions.
Note that the readout operation is not used for node-level tasks. For description of the datasets, see Appendix \ref{sec:app:details:node}.
We trained all of them for $100$ epochs for \texttt{bfs} and for $200$ epochs for \texttt{shortest};
and we compared their test MAE\footnote{MAPE was not applicable since the ground-truth value for some nodes can be $0$.} in Table~\ref{tab:exp:node}.
GNP and all baseline methods were near perfect on \texttt{bfs}, regardless of graph types, and GNP was second best on \texttt{shortest}. As expected, GIN with \texttt{min} performed best on \texttt{shortest}.

\subsection{Extrapolation Performance on Set-related Tasks}
\label{sec:exp:set}

We also applied our proposed approach to three set-related tasks.
They are all related to estimating posterior distributions when the likelihood function is Gaussian.
Specifically, the tasks are to find closed-form posterior hyperparameters $\mu_{\text{post}}$ and $\sigma^2_{\text{post}}$, the MAP estimate $\hat{\mu}_{\text{MAP}}$ of $\mu$ when $\sigma^2$ is known, and the MAP estimate $\hat{\sigma}^2_{\text{MAP}}$ if $\sigma^2$ when $\mu$ is known. 
Note that ground-truth values of $\mu_{\text{post}}$ and  $\hat{\mu}_{\text{MAP}}$ are identical, while we used different loss functions for them. For description of the datasets, see Appendix \ref{sec:app:details:set}.

\begin{table}
\vspace{-4mm}
  \caption{Extrapolation performance in terms of MAPE on set-related tasks. %
  Only the basic model equipped with \method performed consistently well on all tasks.
  Especially for $\sigma^2_{\text{post}}$ and $\hat{\sigma}^2_{\text{MAP}}$, it significantly outperformed all competitors, including Set Transformer. %
  }
  \centering
  \scalebox{0.85}{
  \begin{tabular}{llcccc}
    \toprule
    Model & Pooling & $\mu_{\text{post}}$ & $\sigma^2_{\text{post}}$ & $\hat{\mu}_{\text{MAP}}$ & $\hat{\sigma}^2_{\text{MAP}}$  \\
    \midrule
    \multirow{5}{*}{Basic}
    & \texttt{sum}  & 135.0 $\pm$ 9.3 & 390.7 $\pm$ 99.1 & 126.8 $\pm$ 18.2 & 369.1 $\pm$ 12.1 \\
    & \texttt{max}  & 119.2 $\pm$ 31.7 & 120.6 $\pm$ 4.0 & 118.6 $\pm$ 31.6 & 108.8 $\pm$ 2.0 \\
    & \texttt{mean} & \underline{1.9 $\pm$ 0.2} & 134.2 $\pm$ 6.0 & \underline{1.9 $\pm$ 0.2} & 107.1 $\pm$ 2.2 \\
    & \texttt{min} & 95.8 $\pm$ 15.6 & 126.2 $\pm$ 3.9 & 118.6 $\pm$ 31.6 & 108.0 $\pm$ 2.4 \\
    & Set2Set  & \underline{2.1 $\pm$ 0.2} & 135.7 $\pm$ 4.0 & \underline{1.9 $\pm$ 0.2} & 106.1 $\pm$ 2.6 \\
    \midrule
    \multirow{5}{*}{Deep} & \texttt{sum} & 136.3 $\pm$ 6.8 & 119.3 $\pm$ 16.1 & 100.0 $\pm$ 0.0 & 381.9 $\pm$ 10.8  \\
    & \texttt{max}  & 100.0 $\pm$ 0.0 & 123.8 $\pm$ 2.1 & 98.9 $\pm$ 2.6 & 109.6 $\pm$ 3.3 \\
    & \texttt{mean} & \underline{2.2 $\pm$ 0.2} & 135.2 $\pm$ 4.0 & \underline{2.2 $\pm$ 0.4} & 109.2 $\pm$ 2.9 \\
    & \texttt{min} & 83.0 $\pm$ 10.2 & 99.5 $\pm$ 2.1 & 90.8 $\pm$ 6.0 & 108.8 $\pm$ 5.3 \\
    & Set2Set  & \underline{1.9 $\pm$ 0.3} & 131.1 $\pm$ 8.5 & \underline{1.9 $\pm$ 0.2} & 106.0 $\pm$ 1.5 \\
    \midrule
    \multicolumn{2}{l}{Set Transformer}  & \underline{1.9 $\pm$ 0.2} & 25.0 $\pm$ 9.0 & \underline{1.9 $\pm$ 0.1} & 40.8 $\pm$ 9.5 \\
    \midrule
    \textbf{Basic} & \textbf{\method} & \textbf{1.5 $\pm$ 0.6} & \textbf{0.7 $\pm$ 0.3}  & \textbf{1.5 $\pm$ 0.6} & \textbf{3.1 $\pm$ 0.5} \\
    \bottomrule
  \end{tabular}
  }
  \label{tab:exp:set}
\end{table}

We trained for $300$ epochs (a) the basic model (see Section~\ref{sec:exp:setups}) with GNP, (b) Set Transformer \citep{lee2019set} (c) the basic and deep\footnote{The deep model has an additional FC layer before the pooling function.} models with one among \texttt{sum}, \texttt{max}, \texttt{mean}, \texttt{min}, and Set2Set \citep{vinyals2015order}.
We compared their MAPE in Table~\ref{tab:exp:set}.
The basic model equipped with \method showed near-perfect extrapolation performance on all four tasks, even though the formula for $\hat{\sigma}^2_{\text{MAP}}$ cannot be exactly expressed by \method, and it was the only such model. 
For $\mu_{\text{post}}$ and $\hat{\mu}_{\text{MAP}}$, whose ground-truth values are approximated by the average of the elements, Set Transformer and those equipped with \texttt{mean} or Set2Set were comparable to the basic model with \method, while they were not on the other tasks.

\begin{figure}
    \vspace{-4mm}
    \centering
       \begin{subfigure}{0.25\linewidth}
        \includegraphics[width=\linewidth]{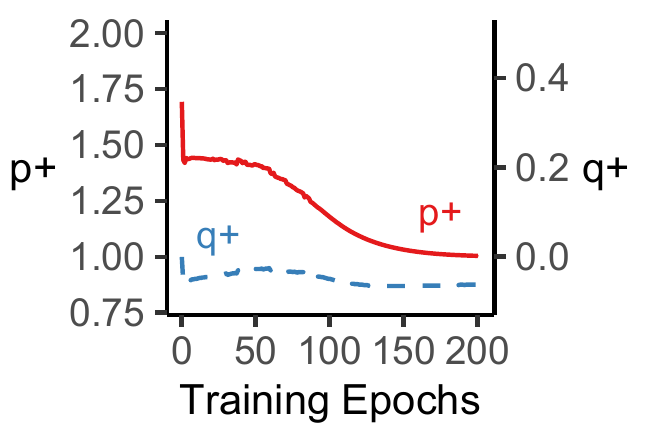}
        \caption{\texttt{sum} (\texttt{maxdegree})}
    \end{subfigure}
    \hspace{-2mm}
 \begin{subfigure}{0.25\linewidth}
        \includegraphics[width=\linewidth]{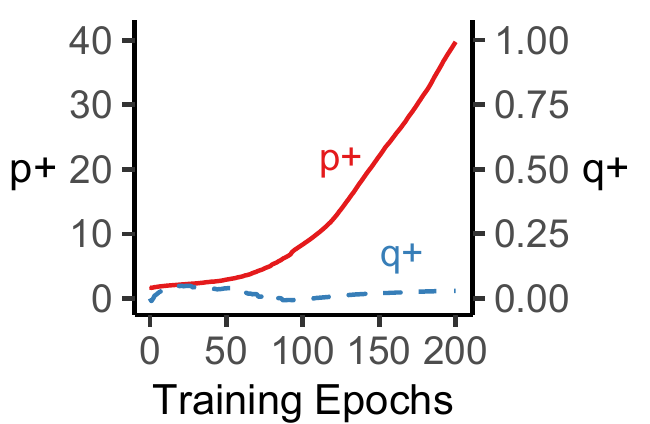}
        \caption{\texttt{max} (\texttt{maxdegree})}
    \end{subfigure}
    \hspace{-2mm}
    \begin{subfigure}{0.25\linewidth}
        \includegraphics[width=\linewidth]{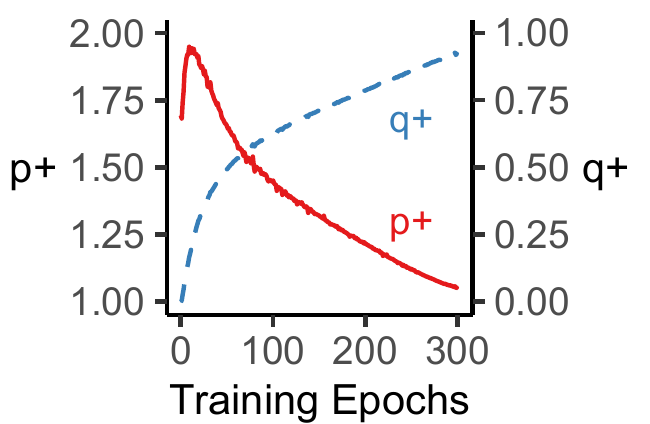}
        \caption{\texttt{mean} ($\mu_{\text{post}}$)}
    \end{subfigure}
    \hspace{-2mm}
    \begin{subfigure}{0.25\linewidth}
        \includegraphics[width=\linewidth]{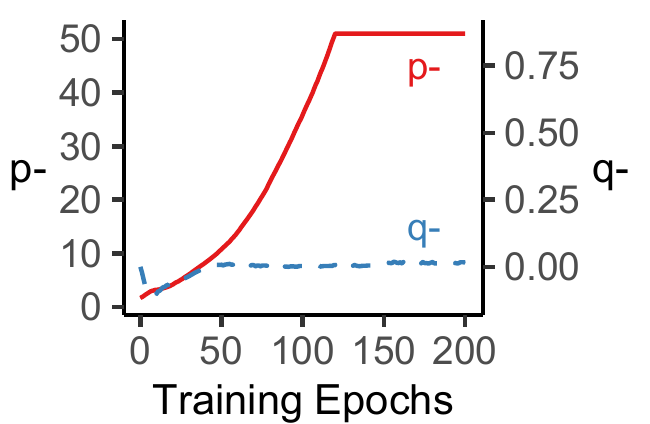}
        \caption{\texttt{min} (\texttt{shortest})}
    \end{subfigure}
    \caption{Empirical behavior of \method. We showed how the parameters $p$ and $q$ of \method changed during training. For each task, \method imitated the ideal pooling functions  if such pooling functions exist. 
    For example, for \texttt{maxdegree}, GNP as aggregation and readout functions approximated $\texttt{sum}$ (i.e., $p^+\approx 1$ and $q^+\approx 0$) and $\texttt{max}$ (i.e., $p^+ \gg 1$ and $q^+\approx 0$), respectively, which performed best.
    }
    \label{fig:sec:operators}
\end{figure}

\subsection{Empirical Behavior of \method}
\label{sec:exp:behavior}
As we discussed in Section \ref{sec:method}, GNP generalizes  \texttt{sum}, \texttt{max}, \texttt{mean}, and \texttt{min}. 
In order to confirm the facts experimentally, we showed in Figure~\ref{fig:sec:operators} how the learnable parameters $p$ and $q$ in GNP changed during training.
For \texttt{maxdegree},
GNP as aggregation and readout functions approximated $\texttt{sum}$ (i.e., $p^+\approx 1$ and $q^+\approx 0$) and $\texttt{max}$ (i.e., $p^+ \gg 1$ and $q^+\approx 0$), respectively, which performed best on the task. %
For $\mu_{\text{post}}$ and \texttt{shortest}, GNP approximated $\texttt{mean}$ (i.e., $p^+ \approx 1$ and $q^+\approx 1$) and $\texttt{min}$ (i.e., $p^- \gg 0$ and $q^-\approx 0$), respectively, which were the best performing baseline for the tasks.
To sum up, empirically, \method imitated the ideal pooling functions for each task if such pooling functions exist. We also observed that either $\method^{+}$ or $\method^{-}$ tends to dominate the other side in all considered graph-level tasks. Detailed results are provided in Appendix \ref{sec:app:exp:behaviors}.

\begin{table}
    \vspace{-2mm}
    \centering
    \caption{Effectiveness of $\method^{-}$. The extrapolation performance of \method degraded without $\method^{-}$.}
    \begin{subtable}[h]{0.48\linewidth}
    \centering
    \scalebox{0.85}{
        \begin{tabular}{ccc}
            \toprule
            Tasks & \method & $\method^{+}$ \\
            \midrule
            \texttt{harmonic} & \textbf{1.1 $\pm$ 0.8} & \underline{2.1 $\pm$ 0.6} \\
            \texttt{shortest} & \textbf{0.332 $\pm$ 0.105} & 0.774 $\pm$ 0.135 \\
            $\sigma^2_{\text{post}}$ & \textbf{0.7 $\pm$ 0.3} & \textbf{0.6 $\pm$ 0.2} \\
            \bottomrule
        \end{tabular}
        }
        \caption{Test Error on Erdős–Rényi Random Graphs}
    \end{subtable}
    \begin{subtable}[h]{0.48\linewidth}
    \centering
    \scalebox{0.85}{
        \begin{tabular}{ccc}
            \toprule
            Graphs & \method & $\method^{+}$ \\
            \midrule
            \texttt{BA} & \textbf{2.5 $\pm$ 0.9} & 31.5 $\pm$ 1.2 \\
            \texttt{tree} & \textbf{14.7 $\pm$ 6.3} & 26.1 $\pm$ 7.4 \\
            \texttt{ladder} &\textbf{2.4 $\pm$ 2.4} & 19.3 $\pm$ 21.1 \\
            \bottomrule
        \end{tabular}
        }
        \caption{Test Error on \texttt{harmonic} on Other Graphs }
    \end{subtable}
    \label{tab:ablation}
\end{table}

\subsection{Ablation Study: Effectiveness of $\method^{-}$}
\label{sec:exp:ablation}
In order to demonstrate the effectiveness of $\method^{-}$ for extrapolation,
we compared the model equipped only with \method and the model only with $\method^{+}$ on each of three tasks (\texttt{harmony}, \texttt{shortest}, and $\sigma^2_{\text{post}}$) in Table~\ref{tab:ablation}.
The detailed settings for each task were the same as in previous experiments. 
The model only with $\method^{+}$ performed well only on the task for $\sigma^2_{\text{post}}$.
The extrapolation performance of \method degraded significantly without $\method^{-}$ on \texttt{harmony} and \texttt{shortest}.

\begin{table}[t]
  \vspace{-4mm}
  \centering
  \caption{Graph classification accuracy. Replacing the carefully chosen pooling functions in SAGPool and ASAPool  with \method improved their accuracy on graph-classification tasks.}
  \begin{subtable}{\linewidth}
  \centering
  \scalebox{0.85}{
  \begin{tabular}{cccccc}
    \toprule
    Model & Aggregation & Readout & \texttt{D\&D} & \texttt{PROTEINS} & \texttt{NCI1} \\
    \midrule
    SAGPool (original) & GCN & mean, max & 0.765 $\pm$ 0.009 & 0.722 $\pm$ 0.008 & 0.688 $\pm$ 0.013 \\
    SAGPool (with \method) & \textbf{\method} & \textbf{\method} & \textbf{0.774 $\pm$ 0.010} & \textbf{0.728 $\pm$ 0.013} & \textbf{0.695 $\pm$ 0.015} \\
    \bottomrule
  \end{tabular}
  }
  \caption{SAGPool}
  \end{subtable}
  \begin{subtable}{\linewidth}
  \centering
  \scalebox{0.85}{
  \begin{tabular}{cccccc}
    \toprule
    Model & Aggregation & Readout & \texttt{D\&D} & \texttt{PROTEINS} & \texttt{NCI1} \\
    \midrule
    ASAPool (original) & GCN & mean, max & 0.764 $\pm$ 0.009 & 0.738 $\pm$ 0.008 & 0.711 $\pm$ 0.004 \\
    ASAPool (with \method) & \textbf{\method} & \textbf{\method} & \textbf{0.772 $\pm$ 0.007} & \textbf{0.739 $\pm$ 0.006} & \textbf{0.725 $\pm$ 0.007} \\
    \bottomrule
  \end{tabular}
  }
  \caption{ASAPool}
  \end{subtable}
  \label{tab:exp:graphcl}
\end{table}

\subsection{Effectiveness of \method on Two Real-world Tasks}
\label{sec:exp:graphcl}\label{sec:exp:im}

\paragraph{Graph classification} 
We compared the graph classification accuracy of hierarchical SAGPool \citep{lee2019self} and ASAPool \citep{ranjan2020asap}, and their variants with GNP. For the variant of SAGPool, we replaced all pooling functions before, inside, and between graph pooling operations. 
For the variant of ASAPool, we replaced all pooling functions except for those inside LEConv. Since we used \method, instead of the concatenation of global average pooling and max pooling functions, the input dimension of the first fully-connected layer after them was reduced by half. For the variants, except for the additional hyperparameters of \method, all hyperparemters were set the same as those in the original one.

We used three datasets from TUDataset \citep{morris2020tu}.  \texttt{D\&D} \citep{dobson2003distinguishing,shervashidze2011weisfeiler} and \texttt{PROTEINS} \citep{dobson2003distinguishing,borgwardt2005protein} contain protein-interaction graphs, and \texttt{NCI1} \citep{wale2006comparison} contains the graphs representing chemical compounds. %
For consistency with the original SAGPool, we performed $10$-fold cross validation with $20$ different random seeds. For ASAPool, we performed $10$-fold cross validation with the $20$ random seeds specified in its implementation.

We report the test accuracy with standard deviation in Table \ref{tab:exp:graphcl}. SAGPool and ASAPool equipped with \method consistently outperformed the original models with a carefully chosen pooling functions.

\begin{table}[]
    \vspace{-4mm}
    \centering
    \caption{Influence maximization performance. The influences of $100$ seed nodes produced by MONSTOR and its variants in graphs unseen during training are reported. The variant equipped with GNP outperforms  original MONSTOR (with \texttt{max}) and the other variant (with \texttt{sum}).}
    
    \scalebox{0.75}{
    \setlength{\tabcolsep}{0.3em}
    \begin{tabular}{cccccccccc}
        \toprule
        \multirow{2}{*}{Aggregation} & \multicolumn{3}{c}{\texttt{Extended}} & \multicolumn{3}{c}{\texttt{WannaCry}} & \multicolumn{3}{c}{\texttt{Celebrity}} \\
        & \texttt{BT} & \texttt{JI} & \texttt{LP} & \texttt{BT} & \texttt{JI} & \texttt{LP} & \texttt{BT} & \texttt{JI} & \texttt{LP} \\ 
        \midrule
        \texttt{max} & 1222.5$\pm$0.4 & 706.9$\pm$0.1 & 3259.6$\pm$0.7 & 2746.5$\pm$1.4 & 1646.6$\pm$2.1 & 9090.2$\pm$3.8 & 155.2$\pm$0.1 & \textbf{140.5$\pm$0.0} & 5665.0$\pm$1.4\\
        \texttt{sum} & 1216.6$\pm$1.7 & 706.5$\pm$0.2 & 3189.2$\pm$6.9 & 2742.6$\pm$0.9 & 1645.8$\pm$0.2 & 9030.1$\pm$2.0 & 153.9$\pm$0.4 & \textbf{140.5$\pm$0.0} & \textbf{5666.9$\pm$0.6} \\
        \textbf{\method} & \textbf{1223.0$\pm$0.3} & \textbf{707.3$\pm$0.2} & \textbf{3262.1$\pm$1.7} & \textbf{2753.4$\pm$0.1} & \textbf{1648.3$\pm$0.1} & \textbf{9098.4$\pm$2.2} & \textbf{155.3$\pm$0.8} & 140.4$\pm$0.0 & 5666.1$\pm$1.8 \\
        \bottomrule
    \end{tabular}
    }
    \label{tab:im_monstor}
\end{table}

\paragraph{Influence maximization}

We compared the performance of MONSTOR \citep{ko2020monstor} and its variants with GNP on the influence maximization task \citep{kempe2003maximizing}, which has been extensively studied due to its practical applications in viral marketing and computational epidemiology. 
The objective of the task is to choose a given number of seed nodes so that their collective influence (i.e., degree of spread of information through a given social network) is maximized.

For experimental details, we followed \citep{ko2020monstor}:
(a) we used three real-world social networks (\texttt{Extended}, \texttt{WannaCry}, and \texttt{Celebrity}) with three kinds of realistic activation probabilities (\texttt{BT}, \texttt{JI}, and \texttt{LP}), (b) we used the same training methods and hyperparameters except for the additional parameters of \method, and (c) we compared MONSTOR and its variants in an inductive setting. For example, we used the model trained using the \texttt{Celebrity} and \texttt{WannaCry} datasets to test the performance on the \texttt{Extended} dataset. For additional details of the influence maximization problem and MONSTOR, see Appendix \ref{sec:app:im}.

We performed three runs and reported the influence maximization performance with standard deviations in Table \ref{tab:im_monstor}. As seen in the results with \texttt{sum} and \texttt{max} aggregations, the performances heavily depended on the choice of the aggregation function. 
In most of the cases, MONSTOR equipped with \method outperformed the original MONSTOR with \texttt{max} aggregation and also a variant of MONSTOR with \texttt{sum} aggregation.

\section{Conclusion}
\label{sec:conclusion}

In this work, we proposed GNP, a learnable norm-based pooling function that can readily be applied to arbitrary GNNs or virtually to any neural network architecture involving permutation-invariant pooling operation. The key advantages of GNP are its generality and ability to extrapolate. We showed that GNP includes most of the existing pooling functions and can express a broad class of pooling functions as its special cases. More importantly, with various synthetic and real-world problems involving graphs and sets, we demonstrated that the networks with GNP as aggregation or readout functions can correctly identify the pooling functions that can successfully extrapolate. We also introduced some non-trivial design choices and techniques to stably train GNP. The limitation of our work is that, although we have empirically demonstrated the excellent extrapolation performance on various tasks, we have not developed theoretical arguments regarding under what condition models constructed with GNP will extrapolate well. It would be an interesting future work to rigorously study the class of problems that GNP can solve.

\subsubsection*{Reproducibility Statement}

We provided the source code used in our experiments in main paper, including the implementations of \method and the GIN model,
in the supplementary materials. The provided supplementary matarials also include example synthetic datasets and the pretrained weights used in our experiments.

\bibliography{iclr2022_conference}
\bibliographystyle{iclr2022_conference}

\appendix
\newpage
\section{Theoretical Analysis (Related to Section~\ref{sec:method:analysis})}
\label{sec:app:theory}
Similarly to \cite{xu2021neural}, we present an analysis of the extrapolation ability of GNNs with our \method pooling function.
Specifically, we show that a one-layer GNN with the \method pooling function can extrapolate on the \texttt{harmonic} task. Let $f(\bm{\theta}, G)$ be a one-layer GNN defined as follows:
\begin{align}\label{eq:gnn_gnp_structure}
f(\bm{\theta}, G) = W^{(2)} \text{ } \method\left(\left\{\sum_{v\in N(u)} W^{(1)}\bm{x_v}\right\}_{u \in V}\right),
\end{align}
where $\bm{\theta}$ is the parameters of the GNN, $G = (V, E)$ is the input graph, $x_v$ is the initial feature of node $v \in V$, and $N(u) \subseteq V$ is the set of the neighbors of node $u \in V$.

When two graphs $G$ and $G'$ are given, the Graph Neural Tangent Kernel (GNTK)~\citep{du2019graph} is computed as 
$$
\textsc{GNTK}(G, G') = \mathbb{E}_{\bm{\theta} \sim \mathcal{N}(0, I)}\left[\left\langle\frac{\partial f(\bm{\theta}, G)}{\partial \bm{\theta}},\frac{\partial f(\bm{\theta}, G')}{\partial \bm{\theta}}\right\rangle\right].
$$

\subsection{Feature map of the GNTK}
We first compute the GNTK for the network defined as \eqref{eq:gnn_gnp_structure} and derive the corresponding feature map using a general framework presented in \citet{jacot2018neural, du2019graph,xu2021neural}. Let $\Sigma^{(1)}$, $\Sigma^{(2)}$ be the covariance for the first linear layer and the second linear layer, respectively. Also, let $\Theta^{(1)}$ be the kernel value after the first linear layer, respectively.

From the framework, $\Sigma^{(1)}$ and $\Theta^{(1)}$ are determined as follows:
\begin{align*}
    \left[\Sigma^{(1)}(G, G')\right]_{uu'} &= \left[\Theta^{(1)}(G, G')\right]_{uu'} = \bm{x}_u^\top\bm{x}_u.
\end{align*}

Also, $\Sigma^{(2)}$ can be computed as

\begin{align*}
    \Sigma^{(2)}(G, G') = \mathbb{E}_{(f(v), f(v')) \sim \mathcal{N}\left(\bm{0}, [\Lambda^{(1)}(G, G')]_{vv'}\right)}\Bigg[ & \method\left(\left\{\sum_{v \in N(u)} f(v) \right\}_{u \in V}\right)\nonumber\\
    & \method\left(\left\{\sum_{v' \in N(u')} f(v') \right\}_{u' \in V'}\right)\Bigg],
\end{align*}

where
$$[\Lambda^{(1)}(G, G')]_{vv'} = \begin{pmatrix}
  [\Sigma^{(1)}(G, G)]_{vv} & [\Sigma^{(1)}(G, G')]_{vv'}\\
  [\Sigma^{(1)}(G', G)]_{v'v} & [\Sigma^{(1)}(G', G')]_{v'v'}
\end{pmatrix} = \begin{pmatrix}
  \bm{x}_v^\top\bm{x}_v & \bm{x}_v^\top\bm{x}_v'\\
  \bm{x}_v'^\top\bm{x}_v & \bm{x}_v'^\top\bm{x}_v'
\end{pmatrix}.$$

By a simple algebraic manipulation, one can easily see that the feature map $\phi(G)$ is computed as

\begin{align}
\phi(G) &= c \cdot \Bigg(\method\left(\{{\bm{w}^{(k)}}^\top\bm{h_v}\}_{v \in V}\right), \nonumber\\
& \frac{1}{|V|^q} \sum_{u \in V} ({\bm{w}^{(k)}}^\top\bm{h_u})^{p-1} \bigg(\sum_{v \in V} ({\bm{w}^{(k)}}^\top\bm{h_v})^{p}\bigg)^{\frac{1}{p} - 1} \mathbb{I}({\bm{w}^{(k)}}^\top\bm{h_u} > 0) \cdot \bm{h_u}, \ldots\Bigg),
\end{align}

where $\bm{w}^{(k)} \sim \mathcal{N}(\bm{0}, \bm{I})$, $c$ is a constant, and $\bm{h}_{u}$ is the sum of the initial features of the neighbors $N(u)$ of node $u$, i.e. $\sum_{v \in N(u)} \bm{x}_v$.

\subsection{Analysis on the \texttt{harmonic} task}

We analyze the extrapolation ability of GNNs on the \texttt{harmonic} task, whose exact functional form is given as
\begin{align}
f^\star(G) = \Bigg(\sum_{v \in V} \bigg(\sum_{u \in N(v)} 1\bigg)^{-1}\Bigg)^{-1}.\label{eq:harmonic}
\end{align}
Following \citet{xu2021neural}, we assume \emph{linear algorithmic alignment}; if a neural network can simulate a target function $f$ by replacing MLP modules with \emph{linear} functions, (i.e., the nonlinearities of the neural network is well matches with the target function, so the neural network only has to learn the linear (MLP) part), than it can correctly learn the target function, and thus can extrapolate well.  With this hypothesis, we proceed as follows. We assume that a GNN is in the NTK regime, that is, the GNN is initialized in a specific way called NTK parameterization, trained via gradient descent with small step size, and the widths of the network tend to infinty. In such case, the GNN behaves as a solution to kernel regression with GNTK kernel. Then we convert the kernel regression problem into a constrained optimization problem in the feature space induced from GNTK kernel, and show that the solution for the constrained optimization problem aligns with the functional form of the \texttt{harmonic} task (\eqref{eq:harmonic}).

We first state the following Lemma from \citet{xu2021neural} showing that a NTK kernel regression solution can be viewed as a constrained optimization problem in the feature space.
\begin{lemma}[\textbf{Lemma 2} in \citet{xu2021neural}]\label{lem:kernel_regression}
Suppose $\text{NTK}_{\text{train}}$ is the $n \times n$ kernel for training data, $\text{NTK}(\bm{x}, \bm{x}_i)$ is the kernel value between test data $\bm{x}$ and training data $\bm{x}_i$, and $Y$ is the training labels.
Let $\phi(\bm{x})$ be a feature map induced by a neural tangent kernel, for any $x \in \mathbb{R}^d$. The solution to kernel regression 
$$
(\text{NTK}(\bm{x}, \bm{x}_1),\ldots,\text{NTK}(\bm{x}, \bm{x}_n)) \cdot \text{NTK}_{\text{train}}^{-1}Y
$$
is equivalent to $\phi(\bm{x})^\top\bm{\beta}_{\text{NTK}}$, where $\bm{\beta}_{\text{NTK}}$ is
\begin{align}
&\min_{\bm{\beta}} \Vert\bm{\beta}\Vert_{2} \nonumber\\
&\text{\text{s.t.} } \phi(\bm{x}_i)^\top\bm{\beta}=y_i, \quad \text{ \text{for} } i = 1,\ldots,n.
\label{eq:constrained_optimization}
\end{align}
\end{lemma}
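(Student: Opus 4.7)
The plan is to reconstruct the standard neural tangent kernel argument that identifies kernel regression with linear regression in the feature space induced by the kernel, and then to read off the minimum-norm interpretation via a Pythagorean decomposition. Because the statement is attributed to \citet{xu2021neural}, the plan is to sketch the classical argument rather than to invent a new one.

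First, I would rewrite every object in the claim using the defining relation $\text{NTK}(\bm{x}, \bm{x}') = \phi(\bm{x})^\top \phi(\bm{x}')$ of the feature map. Stacking the training features into an operator $\Phi$ whose $i$th row is $\phi(\bm{x}_i)^\top$, the training kernel factors as $\text{NTK}_{\text{train}} = \Phi \Phi^\top$, and the cross-kernel row vector becomes $(\text{NTK}(\bm{x}, \bm{x}_1), \dots, \text{NTK}(\bm{x}, \bm{x}_n)) = \phi(\bm{x})^\top \Phi^\top$. Substituting both identities into the kernel-regression formula on the left-hand side of the claim immediately yields
\[
\phi(\bm{x})^\top \Phi^\top (\Phi \Phi^\top)^{-1} Y \;=\; \phi(\bm{x})^\top \bm{\beta}_{\text{NTK}}, \qquad \bm{\beta}_{\text{NTK}} \,:=\, \Phi^\top (\Phi \Phi^\top)^{-1} Y,
\]
which is the desired linear representation.

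Next, I would verify that this $\bm{\beta}_{\text{NTK}}$ is the minimizer in \eqref{eq:constrained_optimization}. Feasibility is direct: $\Phi \bm{\beta}_{\text{NTK}} = \Phi \Phi^\top (\Phi \Phi^\top)^{-1} Y = Y$, so every training constraint $\phi(\bm{x}_i)^\top \bm{\beta}_{\text{NTK}} = y_i$ is satisfied. For optimality, I would decompose any other feasible vector as $\bm{\beta} = \bm{\beta}_{\text{NTK}} + \bm{\eta}$ with $\Phi \bm{\eta} = 0$; such $\bm{\eta}$ lies in $\ker \Phi$, which is orthogonal to $\mathrm{range}(\Phi^\top)$, the subspace in which $\bm{\beta}_{\text{NTK}}$ lives by construction. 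The Pythagorean identity then gives $\|\bm{\beta}\|_2^2 = \|\bm{\beta}_{\text{NTK}}\|_2^2 + \|\bm{\eta}\|_2^2 \geq \|\bm{\beta}_{\text{NTK}}\|_2^2$, with equality if and only if $\bm{\eta} = \bm{0}$, so $\bm{\beta}_{\text{NTK}}$ is the unique minimum-norm solution.

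The only subtlety, and hence the main technical point, is that the NTK-induced feature map $\phi$ is typically infinite-dimensional, so $\Phi$ should be interpreted as a bounded linear operator from the reproducing kernel Hilbert space of the NTK to $\mathbb{R}^n$ rather than as a finite matrix. The argument carries over unchanged: $\Phi \Phi^\top$ remains an $n \times n$ matrix, the range--kernel orthogonality and the Pythagorean identity hold in every Hilbert space, and the only additional hypothesis needed is that $\text{NTK}_{\text{train}} = \Phi \Phi^\top$ is invertible, which is equivalent to the linear independence of $\{\phi(\bm{x}_i)\}_{i=1}^n$ in the RKHS. This Hilbert-space lift is the only place that requires care, but it introduces no new ideas beyond the finite-dimensional linear-algebra version, and the full write-up can be adopted directly from \citet{xu2021neural}.
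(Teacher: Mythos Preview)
Your argument is correct and is precisely the standard linear-algebra/RKHS derivation one would expect: factor the kernel through the feature map, identify $\bm{\beta}_{\text{NTK}} = \Phi^\top(\Phi\Phi^\top)^{-1}Y$, check feasibility, and use range--kernel orthogonality plus Pythagoras to get minimum norm. There is nothing to criticize in the mathematics.

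For comparison, the paper itself does not give a proof at all; its entire proof reads ``See \cite{xu2021neural}.'' So your write-up is strictly more informative than what the paper provides. Your sketch is in fact the argument that appears in the cited reference, so in that sense you have reconstructed exactly what the paper is deferring to.
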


\begin{proof}
See \cite{xu2021neural}.
\end{proof}

\begin{theorem}
Assume all the nodes in $G$ have the same scalar feature $1$. Then, a GNN defined as \eqref{eq:gnn_gnp_structure} trained with squared loss in the NTK regime learns the \texttt{harmonic} task function (\eqref{eq:harmonic}). 
\end{theorem}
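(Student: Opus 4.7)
My plan is to follow the framework of \citet{xu2021neural} and reduce the theorem to identifying a minimum-norm coefficient vector $\bm{\beta}$ in the feature-space representation of the GNTK induced by the architecture \eqref{eq:gnn_gnp_structure}. The key enabling fact is that, under the stated assumptions, \method can represent the harmonic operation exactly, so the nonlinearities of the network align with those of $f^\star$ and only the linear part needs to be learned.

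First, I would specialize the feature map $\phi(G)$ derived in the preceding subsection to the assumption that every node has scalar feature $1$. Under this assumption, $\bm{h}_u = \sum_{v \in N(u)} 1 = d_u$ is the degree of $u$, the weights $\bm{w}^{(k)}$ reduce to scalars $w^{(k)} \sim \mathcal{N}(0, 1)$, and the first block of coordinates of $\phi(G)$ collapses to samples of $\method(\{w^{(k)} d_v\}_{v \in V})$.

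Second, I would exploit the flexibility of \method. Selecting the negative branch $\method^{-}$ with $(p^-, q^-) = (1, 0)$ and suppressing $\method^{+}$ through the mixing layer $(\mathbf{W}, \mathbf{b})$, this block evaluates to
\[
\left(\sum_{v \in V} (w^{(k)} d_v)^{-1}\right)^{-1} \;=\; w^{(k)} \cdot \left(\sum_{v \in V} d_v^{-1}\right)^{-1} \;=\; w^{(k)} \cdot f^\star(G),
\]
so these coordinates of $\phi(G)$ factor as a data-independent random scalar times the exact harmonic target.

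Third, I would construct an explicit $\bm{\beta}^\star$ supported only on the block above, with entries chosen so that the random factors $w^{(k)}$ aggregate (in the infinite-width limit) to the constant $1$, giving $\phi(G)^\top \bm{\beta}^\star = f^\star(G)$ for every graph $G$. This $\bm{\beta}^\star$ automatically satisfies all training constraints, so it is feasible for the constrained optimization \eqref{eq:constrained_optimization} of Lemma~\ref{lem:kernel_regression}. Applying the linear algorithmic alignment hypothesis of \citet{xu2021neural}, the minimum-norm feasible solution $\bm{\beta}_{\text{NTK}}$ is a rescaling of $\bm{\beta}^\star$ that also yields $\phi(G)^\top \bm{\beta}_{\text{NTK}} = f^\star(G)$ on every graph, and invoking the lemma once more converts this into the trained-network prediction.

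The main obstacle will be the last step: rigorously certifying that the minimum-norm solution inherits the functional form of $\bm{\beta}^\star$ rather than fitting the training labels by some unintended combination of the remaining feature coordinates. This reduces to verifying that any feasible perturbation of $\bm{\beta}^\star$ that changes its output on some test graph must strictly increase its $\ell_2$ norm, which in turn relies on the training set spanning a sufficiently rich subspace in the feature space spanned by the harmonic-aligned block. A secondary technical concern is \method's non-smoothness at $p = 0$ and its piecewise treatment of small inputs; these are benign here because $d_u \ge 1$ keeps all intermediate quantities bounded away from the singular regime.
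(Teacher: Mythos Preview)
Your overall strategy matches the paper's: specialize the \method parameters so that the pooling nonlinearity coincides with the harmonic operation, compute the induced GNTK feature map, and then invoke Lemma~\ref{lem:kernel_regression} to identify the trained network with the min-norm solution of \eqref{eq:constrained_optimization}. The place where you and the paper diverge is precisely the step you flag as the ``main obstacle.''

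The paper does not construct a candidate $\bm{\beta}^\star$ and then argue post hoc that it is norm-minimizing. Instead it exploits the scalar-feature assumption to collapse the infinite family $\{\bm{w}^{(k)}\}$ to just two directions (positive and negative), aggregating all coefficients along each direction into four scalars $\hat{\bm{\beta}}_+,\hat{\bm{\beta}}'_+,\hat{\bm{\beta}}_-,\hat{\bm{\beta}}'_-$. The constrained optimization then becomes finite-dimensional, and the min-norm solution is read off directly from the KKT stationarity conditions. This sidesteps entirely the verification you describe (that no ``unintended combination of the remaining feature coordinates'' can undercut the norm), because there are no remaining coordinates once the reduction is made.

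One concrete correction: the KKT computation in the paper shows that the min-norm solution places equal nonzero weight on \emph{both} blocks of the feature map (the \method-output block and the gradient block), not just the first. Your $\bm{\beta}^\star$, supported only on the first block, is feasible but strictly suboptimal in norm. The conclusion survives because the second block, evaluated at $p=-1,q=0$ with scalar $w>0$, simplifies to $\sum_{u} d_u^{-1}\bigl(\sum_v d_v^{-1}\bigr)^{-2} = f^\star(G)$ as well, so any convex combination of the two blocks still outputs $f^\star(G)$. But you would need to check this explicitly; invoking ``linear algorithmic alignment'' as a black box does not do that work for you, and the paper does not treat it as a citable lemma either.
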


\begin{proof}
Assume $(p^+, p^-, q^+, q^-) = (\infty, 1, \infty, 0)$. Then every output of $\method^{+}$ goes to zero regardless of inputs and $\method^{-}$ aligns with the target function \eqref{eq:harmonic}. The feature map of GNTK in this case can be simplified as follows:

\begin{align*}
\phi(G) &= c \cdot \Bigg(\method\left(\{{\bm{w}^{(k)}}^\top\bm{h_v}\}_{v \in V}\right), \nonumber\\
& \sum_{u \in V} ({\bm{w}^{(k)}}^\top\bm{h_u})^{-2} \left(\sum_{v \in V} \frac{1}{{\bm{w}^{(k)}}^\top\bm{h_v}}\right)^{-2} \mathbb{I}({\bm{w}^{(k)}}^\top\bm{h_u} > 0) \cdot \bm{h_u}, \ldots\Bigg).
\end{align*}

By \textbf{Lemma~\ref{lem:kernel_regression}}, we know that in the NTK regime, the GNN $f(\bm{\theta}, G)$ behaves as the solution to the constrained optimization problem~\eqref{eq:constrained_optimization} with feature map $\phi(G)$ and coefficients $\bm{\beta}$. Let $\hat{\bm{\beta}}_{\bm{w}} \in \mathbb{R}$ be a coefficient corresponding to $\method( \{ \bm{w}^\top \bm{h}_u \}_{u \in V} )$ and $\hat{\bm{\beta}}'_{\bm{w}} \in \mathbb{R}$ be a coefficient corresponding to the other term in $\phi(G)$. Similar to Lemma 3 in \cite{xu2021neural}, we can combine the effect of coefficients for $\bm{w}$'s in the same direction. For each $\bm{w} \sim \mathrm{Unif}(\text{unit sphere})$, we can define $\bm{\beta}_{\bm{w}}$ and $\bm{\beta}_{\bm{w}}'$ as the total effect of weights in the same direction with considering scaling.

\begin{align*}
    \bm{\beta}_{\bm{w}} &= \int \bm{\beta}_{\bm{u}} \mathbb{I}\left(\frac{\bm{w}^\top \bm{u}}{\Vert \bm{w} \Vert \Vert \bm{u} \Vert} = 1 \right) \cdot \frac{\Vert \bm{u} \Vert}{\Vert \bm{w} \Vert} \mathbb{P}(\bm{u}), \\
    \bm{\beta}'_{\bm{w}} &= \int \bm{\beta}'_{\bm{u}} \mathbb{I}\left(\frac{\bm{w}^\top \bm{u}}{\Vert \bm{w} \Vert \Vert \bm{u} \Vert} = 1 \right) \cdot \frac{\Vert \bm{u} \Vert}{\Vert \bm{w} \Vert} \mathbb{P}(\bm{u}).
\end{align*}

Since the dimension of the input features is $1$, we only need to consider two directions of $\bm{w}$. To get min-norm solution, we compute the Lagrange multiplier as

\begin{align*}
    &\min_{\hat{\bm{\beta}},\hat{\bm{\beta}}'} \int \hat{\bm{\beta}}_{\bm{w}}^{2} + \hat{\bm{\beta}}_{\bm{w}}'^{2} d\mathbb{P}(\bm{w}) \\
    \text{\textit{s.t. }} & \int\method\left(\{\bm{w}^\top\bm{h_v}\}_{v \in V}\right) \cdot \hat{\bm{\beta}}_{\bm{w}} + \sum_{u \in V} (\bm{w}^\top\bm{h_u})^{-2} \left(\sum_{v \in V} \frac{1}{\bm{w}^\top\bm{h_v}}\right)^{-2} \mathbb{I}(\bm{w}^\top\bm{h_u} > 0) \cdot \hat{\bm{\beta}}_{\bm{w}}' \cdot \bm{h_u} d\mathbb{P}(\bm{w}) \\
    & =\left(\sum_{u\in V_i}\bm{h}_u^{-1}\right)^{-1} \quad \forall i \in [n],
\end{align*}
where $G_i = (V_i, E_i)$ is the $i$-th training data and $\bm{w} \sim \mathcal{N}(0, 1)$. By KKT condition, taking the derivative for each variable, we can get the following conditions:

\begin{align*}
    \hat{\bm{\beta}}_{+} &= c \cdot \sum_{i=1}^{n} \lambda_i \cdot \left(\sum_{u\in V_i}\bm{h}_u^{-1}\right)^{-1}, \\
    \hat{\bm{\beta}}_{-} &= 0 \\
    \hat{\bm{\beta}}'_{+} &= c \cdot \sum_{i=1}^{n} \lambda_i \cdot \left(\sum_{u\in V_i}\bm{h}_u^{-1}\right) \cdot \left(\sum_{v\in V_i}\bm{h}_v^{-1}\right)^{-2} \\
    &= c \cdot \sum_{i=1}^{n} \lambda_i \cdot \left(\sum_{u\in V_i}\bm{h}_u^{-1}\right)^{-1}, \\
    \hat{\bm{\beta}}'_{-} &= 0, \\
    \left(\sum_{u\in V_i}\bm{h}_u^{-1}\right)^{-1} &= \hat{\bm{\beta}}_{+} \cdot \left(\sum_{u\in V_i}\bm{h}_u^{-1}\right)^{-1} + \hat{\bm{\beta}}'_{+} \cdot \left(\sum_{u\in V_i}\bm{h}_u^{-1}\right)^{-1} \quad \forall i \in [n], 
\end{align*}

where $\hat{\bm{\beta}}_{+}$, $\hat{\bm{\beta}}'_{+}$ are the combined weights of $\bm{w}$'s in the positive direction, $\hat{\bm{\beta}}_{-}$, $\hat{\bm{\beta}}'_{-}$ are the combined weights of $\bm{w}$'s in the negative direction, and $c$ is a constant.

The above conditions can be satisfied with proper $\lambda_i$'s, so the model can fit all training data. Moreover, since the solution $\phi(G)^\top\bm{\beta}_{\text{NTK}}=\left(\sum_{u\in V}\bm{h}_u^{-1}\right)^{-1}$ is equivalent to the functional form of the target function \eqref{eq:harmonic}, GNN defined as in \eqref{eq:gnn_gnp_structure} can learn the \texttt{harmonic} task.
\end{proof}

Below, we prove GNNs with sum-aggregation and max-readout trained with squared loss in the NTK regime cannot extrapolate well on the \texttt{harmonic} task.

\begin{theorem}
\label{theorem:cannot:max}
Assume all nodes have the same scalar feature $1$. Then, one-layer GNNs with sum-aggregation and max-readout trained with squared loss in the NTK regime do not extrapolate well in the \texttt{harmonic} task.
\end{theorem}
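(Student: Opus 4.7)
The plan is to mirror the structure of the preceding positive result: derive the GNTK feature map for the one-layer GNN with sum-aggregation and max-readout, invoke Lemma~\ref{lem:kernel_regression} to express the trained predictor as the min-norm kernel-regression solution $\phi(G)^{\top}\bm{\beta}_{\text{NTK}}$, and then exhibit out-of-distribution graphs on which this solution provably disagrees with the \texttt{harmonic} target.

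First I would write the network as $f(\bm{\theta}, G) = \bm{w}^{(2)\top} \max\{\bm{W}^{(1)} \bm{h}_u\}_{u \in V}$, with $\bm{h}_u = \sum_{v \in N(u)} \bm{x}_v$, and rerun the derivation of \citet{du2019graph} already used for \method. The resulting feature map has the schematic form
\[
\phi(G) \;=\; c \cdot \left( \max_{u \in V} \bm{w}^{\top} \bm{h}_u,\; \bm{h}_{u^{*}(\bm{w}, G)},\; \ldots \right)_{\bm{w} \sim \mathcal{N}(\bm{0}, \bm{I})},
\]
where $u^{*}(\bm{w}, G) \in \argmax_{u} \bm{w}^{\top} \bm{h}_u$ arises from differentiating the max readout. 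To avoid subgradient ambiguity, it is cleanest to obtain this map as the $p \to \infty$ limit of the $\method^{+}$ feature map derived earlier in this appendix; ties among argmax nodes occur on a $\bm{w}$-set of measure zero and can be ignored.

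Second, under the hypothesis that every node of $G$ carries the same scalar feature $1$, one has $\bm{h}_u = d_u$ (the degree of $u$), and for the one-dimensional weight $w \sim \mathcal{N}(0, 1)$, $\bm{w}^{\top} \bm{h}_u = w \cdot d_u$. Both coordinates of $\phi(G)$ therefore depend on $G$ only through $d_{\max} := \max_u d_u$ and $d_{\min} := \min_u d_u$ (the $w > 0$ branch selects the maximum and the $w < 0$ branch selects the minimum). By Lemma~\ref{lem:kernel_regression}, the trained predictor $\phi(G)^{\top}\bm{\beta}_{\text{NTK}}$ is thus forced to be a function of the pair $(d_{\max}, d_{\min})$ alone.

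Finally, the \texttt{harmonic} target $f^{\star}(G) = \bigl(\sum_{v \in V} d_v^{-1}\bigr)^{-1}$ is manifestly not determined by $(d_{\max}, d_{\min})$: one may construct graphs of growing size sharing the same extremal degrees but whose harmonic means differ by an arbitrarily large factor, by varying the multiplicities of intermediate-degree nodes. Any such pair of test graphs forces the min-norm kernel-regression predictor to output the same value while $f^{\star}$ takes different values, so the extrapolation error cannot vanish. The main obstacle in carrying this out rigorously is the non-smoothness of $\max$ and the care required when passing to the $p \to \infty$ limit; this is routine but one must verify that the constrained optimization of Lemma~\ref{lem:kernel_regression} also passes cleanly through the limit.
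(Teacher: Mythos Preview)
Your proposal is correct but takes a genuinely different route from the paper. The paper's own proof is very short and does not re-derive the GNTK feature map: it simply observes that matching the \texttt{harmonic} target $\bigl(\sum_v d_v^{-1}\bigr)^{-1}$ with a network of the form $\mathrm{MLP}\bigl(\max_u \sum_{v\in N(u)} h_v\bigr)$ would force the post-readout layer to realize a nonlinear (inverse-type) transform, and then invokes the result of \citet{xu2021neural} that MLPs in the NTK regime converge to linear functions along directions from the origin, so no such transform is available outside the training domain.

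Your argument is instead an expressivity argument at the level of the feature map: you show that under the scalar-feature assumption $\phi(G)$ depends on $G$ only through $(d_{\max}, d_{\min})$, and then produce graph families sharing $(d_{\max}, d_{\min})$ but with divergent harmonic values. This is more self-contained (it does not lean on the cited MLP-linearity theorem) and more informative, since it pinpoints exactly which statistic the max-readout retains and why that statistic is insufficient. The paper's argument, by contrast, is shorter and reusable verbatim for the other basic readouts (\texttt{sum}, \texttt{mean}, \texttt{min}), as it does in the sentence following the proof; your approach would require recomputing the feature map for each readout. Both arguments are valid, and the technical caveat you flag about the non-smoothness of $\max$ (handled via the $p\to\infty$ limit) is appropriate.
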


\begin{proof}
The target function of \texttt{harmonic} task is
\[
f^\star(G) = \bigg( \sum_{v\in V} \bigg( \sum_{u\in N(v)} 1 \bigg)^{-1}\bigg)^{-1},
\]
so in order for one-layer GNNs with sum-aggregation and max-readout of the form
\[
\mathrm{MLP}\left( \max \sum_{v \in V} h_v \right), \quad h_v \text{ is the hidden vector for the node }v,
\]
to match the target function, $\mathrm{MLP}$ must learn some non-linear transform between $\max$ and the inverse function.
However, as shown in \citet{xu2021neural}, $\mathrm{MLP}$ converges to a linear function along directions from the origin. Hence, there always exist domains for which the GNN cannot learn the target function. 
\end{proof}

Similarly, we can show that one-layer GNNs with sum-aggregation and min/sum/mean-readout cannot learn the target function for some domain, meaning that they cannot extrapolate.

\section{Training Details}
\label{sec:app:details}
We used the open-source implementations of Set Transformer provided by the authors.
We used the open-source implementation of SAGPool in Pytorch Geometric \citep{Fey/Lenssen/2019} provided by the authors with the reported hyperparameter settings.
For all other models, we used the open-source implementations provided by the DGL framework \citep{wang2019dgl}. 

For all models, we used the mean squared loss (MSE) as training and validation loss functions, unless otherwise stated.
We performed a grid search to find the combination of hyperparameters that minimize the validation loss.
In all experiments, we used the RMSprop optimizer \citep{tieleman2012lecture} to train all models with \method, and for all baseline models, we additionally considered the Adam optimizer \citep{kingma2014adam} with default parameters (i.e.,  $\beta=(0.9, 0.999)$) and $\beta=(0.5, 0.999)$.

\subsection{Extrapolation on Graph-level Tasks (Related to Section \ref{sec:exp:graph})}
\label{sec:app:details:graph}

For each task, we generated Erdős–Rényi \citep{erdos1960evolution} random graphs with probabilities ranging from $0.1$ to $0.9$. We trained and validated our model using such graphs with at least 20 and at most 30 nodes, and we tested on such graphs with at least 50 and at most 100 nodes, following the procedure in \cite{xu2021neural}.
We generated $5,000$ graphs for training, $1,000$ graphs for validation, and $2,500$ graphs for test. For all nodes, we used the scalar $1$ as the node feature.

For further experiments with different structures, we generated $2,500$ graphs of each type among \texttt{ladder} graphs, \texttt{4-regular} random graphs,\footnote{The degree of every node is $4$.} random \texttt{tree}s,  \texttt{expanders},\footnote{We created Erdos-Renyi random graphs with probability $0.8$, following the procedure in \cite{xu2021neural}.} and Barabási–Albert (\texttt{BA}) \citep{barabasi1999emergence} random graphs\footnote{The number of edges to attach from a new node to existing nodes ranged from $0.05\times|V|$ to $0.4\times|V|$.}. They all have at least $50$ and at most $100$ nodes.

Table \ref{tab:graph} describes the hyperparameter search space for all graph-level tasks.

\begin{table}[h]
    \centering
    \caption{Search space for \texttt{maxdegree}, \texttt{harmonic}, and \texttt{invsize} tasks}
    \begin{subtable}{\linewidth}
        \centering
        \scalebox{0.85}{
        \begin{tabular}{cc}
        \toprule
        Hyperparameter & Selection pool \\
        \midrule
        Optimizer & RMSprop \\
        Learning rate for $p$ & 3e-2, 1e-2, 3e-3 \\
        Learning rate for the other parameters & 3e-2, 1e-2, 3e-3, 1e-3\\
        Norm clipping & 1e2, 1e4 \\
        \bottomrule
        \end{tabular}
        }
        \label{tab:graph:gnp}
        \caption{Search space for GIN with \method}
    \end{subtable}
    \begin{subtable}{\linewidth}
        \centering
        \scalebox{0.85}{
        \begin{tabular}{cc}
        \toprule
        Hyperparameter & Selection pool \\
        \midrule
        Optimizer & Adam, Adam with $\beta=(0.5, 0.999)$, RMSprop \\
        Learning rate & 3e-2, 1e-2, 3e-3, 1e-3\\
        Norm clipping & 1e2, 1e4 \\
        Number of iterations (for Set2Set) & $1$, $2$ \\
        \bottomrule
        \end{tabular}
        }
        \label{tab:graph:basic}
        \caption{Search space for GIN with baseline aggregation \& readout functions}
    \end{subtable}
    \begin{subtable}{\linewidth}
        \centering
        \scalebox{0.85}{
        \begin{tabular}{cc}
        \toprule
        Hyperparameter & Selection pool \\
        \midrule
        Optimizer & Adam, Adam with $\beta=(0.5, 0.999)$, RMSprop \\
        Learning rate & 1e-2, 3e-3, 1e-3, 3e-4\\
        Norm clipping & 1e2, 1e4 \\
        \bottomrule
        \end{tabular}
        }
        \label{tab:graph:sagpool}
        \caption{Search space for SAGPool}
    \end{subtable}
    \label{tab:graph}
\end{table}

\subsection{Extrapolation on Node-level Tasks (Related to Section \ref{sec:exp:node})}
\label{sec:app:details:node}

We created $5,000$ graphs for training, $1,000$ graphs for validation, and $2,500$ graphs of each type for test in the way described in Section~\ref{sec:exp:graph}.
Graphs for training and validation have at least $20$ and at most $40$ nodes, while those for test are larger with at least $50$ and at most $70$ nodes.
Target nodes is sampled uniformly at random among all nodes in each graph.

For \texttt{shortest}, we used the scalar $0$ as the feature of the target node, and used the scalar $10 \times |V|$ as the feature of the other nodes. The weight of each edge is drawn uniformly at random from $U(0, 5)$ in training and validation graphs, and from $U(0, 10)$ in test graphs. 
For \texttt{bfs}, we used the scalar $1$ as the feature of the target node and used the scalar $0$ as the feature for the other nodes.
For both tasks, we added self-loop with edge weight $0$ to every node.

Tables \ref{tab:node} describes the hyperparameter search space for all node-level tasks.

\begin{table}[h]
    \centering
    \caption{Search space for \texttt{bfs}, \texttt{shortest} tasks}
    \begin{subtable}{\linewidth}
        \centering
        \scalebox{0.85}{
        \begin{tabular}{cc}
        \toprule
        Hyperparameter & Selection pool \\
        \midrule
        Optimizer & RMSprop \\
        Learning rate for $p$ & 3e-2, 1e-2, 3e-3 \\
        Learning rate for the other parameters & 1e-2, 3e-3, 1e-3\\
        Norm clipping & 1e2, 1e4 \\
        \bottomrule
        \end{tabular}
        }
        \label{tab:node:gnp}
        \caption{Search space for GIN with \method}
    \end{subtable}
    \begin{subtable}{\linewidth}
        \centering
        \scalebox{0.85}{
        \begin{tabular}{cc}
        \toprule
        Hyperparameter & Selection pool \\
        \midrule
        Optimizer & Adam, Adam with $\beta=(0.5, 0.999)$, RMSprop \\
        Learning rate & 3e-2, 1e-2, 3e-3, 1e-3\\
        Norm clipping & 1e2, 1e4 \\
        \bottomrule
        \end{tabular}
        }
        \label{tab:node:basic}
        \caption{Search space for GIN with baseline aggregation functions}
    \end{subtable}
    \label{tab:node}
\end{table}

\subsection{Extrapolation on Set-related Tasks (Related to Section \ref{sec:exp:set})}
\label{sec:app:details:set}

For each task, we generated $4,000$ sets for training, $500$ sets for validation, and $500$ sets for test. For each set, the number of elements is sampled uniformly at random from $[20, 40)$ for training and validation sets, and from $[50, 100)$ for test sets. For $\mu_{\text{post}}$, $\sigma^2_{\text{post}}$, and $\hat{\mu}_{\text{MAP}}$, we sampled elements from $\mathcal{N}(\mu, 1^2)$ where $\mu \sim \mathcal{N}(0, 1^2)$. For $\hat{\sigma}^2_{\text{MAP}}$, we sampled elements $\mathcal{N}(5, \sigma^{2})$ where $\sigma \sim \text{InvGamma}(1, 15)$. As loss functions, we used MSE for $\mu_{\text{post}}$ and $\sigma^2_{\text{post}}$ and used the negative logarithm of the product\footnote{This product is proportional to the posterior probability} of the likelihood and the prior for $\hat{\mu}_{\text{MAP}}$ and $\hat{\sigma}^2_{\text{MAP}}$.

Tables \ref{tab:set} describes the hyperparameter search space for all set-related tasks.

\begin{table}[h]
    \centering
    \caption{Search space for $\mu_{\text{post}}$, $\sigma^2_{\text{post}}$, $\hat{\mu}_{\text{MAP}}$, and $\hat{\sigma}^2_{\text{MAP}}$}
    \begin{subtable}{\linewidth}
        \centering
        \scalebox{0.85}{
        \begin{tabular}{cc}
        \toprule
        Hyperparameter & Selection pool \\
        \midrule
        Optimizer & RMSprop \\
        Learning rate for $p$ & 3e-2, 1e-2, 3e-3\\
        Learning rate for the other parameters & 3e-2, 1e-2, 3e-3\\
        Norm clipping & 1e4 \\
        \bottomrule
        \end{tabular}
        }
        \label{tab:set:gnp}
        \caption{Search space for \method}
    \end{subtable}
    \begin{subtable}{\linewidth}
        \centering
        \scalebox{0.85}{
        \begin{tabular}{cc}
        \toprule
        Hyperparameter & Selection pool \\
        \midrule
        Optimizer & Adam, Adam with $\beta=(0.5, 0.999)$, RMSprop \\
        Learning rate & 3e-2, 1e-2, 3e-3, 1e-3\\
        Norm clipping & 1e4 \\
        \bottomrule
        \end{tabular}
        }
        \label{tab:set:basic}
        \caption{Search space for basic operators}
    \end{subtable}
    \begin{subtable}{\linewidth}
        \centering
        \scalebox{0.85}{
        \begin{tabular}{cc}
        \toprule
        Hyperparameter & Selection pool \\
        \midrule
        Optimizer & Adam, Adam with $\beta=(0.5, 0.999)$, RMSprop \\
        Learning rate & 1e-2, 1e-3, 1e-4\\
        Norm clipping & 1e4 \\
        Number of iterations (for Set2Set) & $1$, $2$ \\ 
        Encoder design (for Set transformer) & $2$ SAB blocks, $2$ ISAB blocks \\ 
        \bottomrule
        \end{tabular}
        }
        \label{tab:set:others}
        \caption{Search space for Set2Set and Set Transformer}
    \end{subtable}
    \label{tab:set}
\end{table}

\subsection{Graph Classification (Related to  Section \ref{sec:exp:graphcl})}

For original SAGPool \citep{lee2019self}, we used the optimal hyperparameter settings shared by the authors\footnote{\url{https://docs.google.com/spreadsheets/d/1JXGNOCQkRHDCQqNarteYpEuWnkNzNq_WFiQrIY276i0/edit?usp=sharing}}. For SAGPool equipped with \method, we used gradient clipping with a maximum gradient norm of $1000$ for the parameters of \method, and we used a different learning rate for $p$ of \method.
For \texttt{DD} , \texttt{PROTEINS}, and \texttt{NCI1}, we used $1\times$, $20\times$, and $10\times$ larger learning rates for $p$ than the original learning rates, respectively.

For original ASAPool \citep{ranjan2020asap}, we used the optimal hyperparameter settings shared by the authors\footnote{\url{https://github.com/malllabiisc/ASAP}}. For ASAPool equipped with \method, we used a different learning rate for $p$ and $q$ of \method.
For \texttt{DD} and \texttt{NCI1}, we used 3e-2 and 3e-3 for the learning rate for $p$ and $q$, respectively. For \texttt{PROTEINS}, we used 1e-1 for the learning rate for $p$, and 1e-2 for the learning rate for $q$.

\subsection{Influence Maximization (Related to Section \ref{sec:exp:im})}
For original MONSTOR \citep{ko2020monstor}, we used the optimal hyperparameter settings provided in the paper. For the parameters of \method, we used the RMSprop optimizer, and the learning rates were set to 3e-2 for $p$ and 3e-3 for $q$.

\section{Additional Experiments and Results}

\begin{table}[]
    \centering
    \caption{Statistics of real-world datasets}
    \scalebox{0.85}{
    \begin{tabular}{cccc}
         \toprule
         Dataset & Number of graphs & Average number of nodes & Average number of edges \\
         \midrule
         \texttt{D\&D} & 1178 & 284.3 & 715.7 \\
         \texttt{PROTEINS} & 1113 & 39.06 & 72.82 \\
         \texttt{NCI1} & 4110 & 29.87 & 32.30 \\
         \bottomrule
    \end{tabular}
    }
    \label{tab:real_stat}
\end{table}

\begin{table}[t]
    \centering
    \caption{Extrapolation performances of \method on real-world datasets. Only except for the maxdegree task on the \texttt{NCI1} dataset, \method showed near-perfect performance.}
    \scalebox{0.85}{
    \begin{tabular}{cccc}
        \toprule
        Task & \texttt{D\&D} & \texttt{PROTEINS} & \texttt{NCI1} \\
        \midrule
        \texttt{invsize} & 1.7$\pm$0.6 & 0.5$\pm$0.2 & 0.3$\pm$0.1 \\
        \texttt{harmonic} & 3.4$\pm$1.1 & 2.3$\pm$0.3 & 2.4$\pm$0.7 \\
        \texttt{maxdegree} & 3.4$\pm$1.3 & 2.8$\pm$1.1 & 22.4$\pm$12.4 \\
        \bottomrule
    \end{tabular}
    }
    \label{tab:real_extrapolate}
\end{table}

\begin{table}[t]
    \centering
    \caption{Extrapolation performance of baseline approaches on real-world datasets. Except for the \texttt{maxdegree} task, there was no combination of simple pooling functions that extrapolated well.}
    \scalebox{0.8}{
    \begin{tabular}{cccc}
        \toprule
        Task & \texttt{D\&D} & \texttt{PROTEINS} & \texttt{NCI1} \\
        \midrule
        \texttt{invsize} (best combination) & 100.0$\pm$0.0 (SortPooling) & 93.6$\pm$2.8 (SAGPool) & 37.9$\pm$0.0 (set2set) \\
        \texttt{harmonic} (best combination) & 552.7$\pm$1012.4 (sum, mean) & 110.2$\pm$3.6 (sum, max) & 39.5$\pm$1.5 (sum, max) \\
        \texttt{maxdegree} (ideal combination) & 0.0$\pm$0.0 (sum, max) & 0.0$\pm$0.0 (sum, max) & 0.0$\pm$0.0 (sum, max) \\
        \texttt{maxdegree} (2nd best combination) & 10.5$\pm$0.9 (sum, mean) & 30.8$\pm$3.9 (sum, mean) & 63.9$\pm$15.0 (sum, mean) \\
        \bottomrule
    \end{tabular}
    }
    \label{tab:real_baselines}
\end{table}

\subsection{Graph-level Extrapolation on Real-world Datasets \\ (Related to  Section \ref{sec:exp:graph})}
\label{sec:app:exp:realgraph}

We further tested the extrapolation performances of \method and baseline approaches using real-world graphs. For real-world graphs, we used \texttt{D\&D}, \texttt{PROTEINS}, and \texttt{NCI1}, which were also used for graph classification tasks in the paper. Table \ref{tab:real_stat} describes statistics of datasets. For evaluation, we ignored graphs with nodes with zero in-degrees.

In this experiment, we used a model trained using the Erdos–Rényi graphs described in Section \ref{sec:exp:graph}. As seen in the Table \ref{tab:real_extrapolate}, \method showed near-perfect extrapolation performance only except for the \texttt{maxdegree} task on the \texttt{NCI1} dataset. Even though the average number of nodes in the \texttt{D\&D} dataset is approximately $10$ times larger than that of the training dataset, the models trained with \method performed well. One of the possible reasons for the relatively high MAPE on the \texttt{NCI1} dataset is its extremely low average degree of nodes, which is roughly $2.17$. Note that the training dataset contains Erdos–Rényi random graphs with edge probabilities ranging from $0.1$ to $0.9$.

We also measured the test error of the baseline approaches, and we reported the test MAPE of the best-performing one in Table \ref{tab:real_baselines}. Except for the \texttt{maxdegree} task, there was no combination of simple pooling functions that extrapolated well. These results are consistent with the experiment results in the paper. On the \texttt{maxdegree} task, the second best combination (among the 16 combinations of \texttt{sum}, \texttt{max}, \texttt{mean}, and \texttt{min}) showed significantly worse extrapolation performance than the GIN model equipped with \texttt{GNP}.

\begin{table}[]
    \centering
    \caption{Test error on heterogeneous structures. Each row denotes the test MAPEs of the model trained using the same graph.}
    \begin{subtable}{\linewidth}
    \centering
    \scalebox{0.8}{
    \begin{tabular}{ccccccc}
         \toprule
         & \texttt{ER} & \texttt{BA} & \texttt{4regular} & \texttt{Expander} & \texttt{Tree} & \texttt{Ladder} \\
         \midrule
         \texttt{ER} & 1.2$\pm$0.3 & 0.9$\pm$0.3 & 0.8$\pm$0.3 & 1.9$\pm$1.0 & 0.8$\pm$0.3 & 0.8$\pm$0.3 \\
         \texttt{BA} & 1.4$\pm$1.3 & 1.1$\pm$0.7 & 1.1$\pm$0.5 & 1.9$\pm$2.7 & 1.0$\pm$0.5 & 1.1$\pm$0.5 \\
         \texttt{4regular} & 9.5$\pm$13.7 & 6.0$\pm$8.6 & 0.6$\pm$0.4 & 15.8$\pm$22.9 & 0.8$\pm$0.2 & 0.7$\pm$0.3 \\
         \texttt{Expander} & 1.2$\pm$0.4 & 1.3$\pm$0.4 & 1.9$\pm$1.8 & 1.0$\pm$0.5 & 5.8$\pm$7.3 & 2.8$\pm$2.8 \\
         \texttt{Tree} & 6.1$\pm$5.9 & 3.7$\pm$3.7 & 0.9$\pm$0.4 & 11.2$\pm$10.5 & 0.9$\pm$0.4 & 0.9$\pm$0.4 \\
         \texttt{Ladder} & 5.1$\pm$2.7 & 3.1$\pm$1.6 & 1.4$\pm$0.5 & 7.0$\pm$5.1 & 2.7$\pm$3.0 & 1.1$\pm$0.6 \\
         \bottomrule
    \end{tabular}
    }
    \caption{\texttt{invsize}}
    \end{subtable}
    \begin{subtable}{\linewidth}
    \centering
    \scalebox{0.8}{
    \begin{tabular}{ccccccc}
         \toprule
         & \texttt{ER} & \texttt{BA} & \texttt{4regular} & \texttt{Expander} & \texttt{Tree} & \texttt{Ladder} \\
         \midrule
         \texttt{ER} & 1.1$\pm$0.8 & 2.5$\pm$0.9 & 1.9$\pm$1.3 & 0.9$\pm$0.5 & 14.7$\pm$6.3 & 2.4$\pm$2.4 \\
         \texttt{BA} & 8.0$\pm$2.2 & 2.9$\pm$0.5 & 2.7$\pm$0.6 & 13.7$\pm$4.0 & 7.2$\pm$3.1 & 2.8$\pm$2.0 \\
         \texttt{4regular} & 80.4$\pm$0.8 & 62.1$\pm$0.8 & 1.5$\pm$1.6 & 92.5$\pm$0.7 & 165.3$\pm$19.2 & 39.2$\pm$5.2 \\
         \texttt{Expander} & 162.3$\pm$62.2 & 382.4$\pm$152.9 & 1202.6$\pm$526.6 & 7.8$\pm$2.2 & 3218.2$\pm$1491.6 & 1658.4$\pm$741.2 \\
         \texttt{Tree} & 119.5$\pm$57.8 & 116.4$\pm$66.7 & 66.7$\pm$22.5 & 126.0$\pm$61.7 & 4.1$\pm$5.0 & 45.0$\pm$15.1 \\
         \texttt{Ladder} & 85.8$\pm$0.4 & 72.4$\pm$0.6 & 24.8$\pm$0.9 & 94.8$\pm$0.2 & 99.8$\pm$7.3 & 4.0$\pm$0.9 \\
         \bottomrule
    \end{tabular}
    }
    \caption{\texttt{harmonic}}
    \end{subtable}
    \begin{subtable}{\linewidth}
    \centering
    \scalebox{0.8}{
    \begin{tabular}{ccccccc}
         \toprule
         & \texttt{ER} & \texttt{BA} & \texttt{4regular} & \texttt{Expander} & \texttt{Tree} & \texttt{Ladder} \\
         \midrule
         \texttt{ER} & 2.5$\pm$0.4 & 2.1$\pm$1.1 & 3.4$\pm$3.7 & 2.3$\pm$1.1 & 1.9$\pm$0.6 & 30.7$\pm$16.9 \\
         \texttt{BA} & 3.9$\pm$2.0 & 2.0$\pm$0.8 & 4.7$\pm$5.6 & 3.6$\pm$1.4 & 2.1$\pm$1.6 & 15.9$\pm$20.1 \\
         \texttt{4regular} & 86.7$\pm$3.5 & 89.2$\pm$2.1 & 5.0$\pm$8.1 & 92.3$\pm$3.3 & 15.5$\pm$1.5 & 38.8$\pm$8.5 \\
         \texttt{Expander} & 13.2$\pm$9.2 & 23.5$\pm$4.6 & 238.6$\pm$138.7 & 7.4$\pm$2.6 & 131.6$\pm$96.7 & 306.7$\pm$174.7 \\
         \texttt{Tree} & 78.8$\pm$28.2 & 57.7$\pm$21.3 & 41.3$\pm$13.2 & 126.9$\pm$44.7 & 2.6$\pm$0.5 & 31.0$\pm$22.2 \\
         \texttt{Ladder} & 91.2$\pm$0.2 & 92.6$\pm$0.1 & 24.1$\pm$1.3 & 95.4$\pm$0.0 & 35.3$\pm$1.1 & 2.1$\pm$1.5 \\
         \bottomrule
    \end{tabular}
    }
    \caption{\texttt{maxdegree}}
    \end{subtable}
    \label{tab:heterogeneous_all}
\end{table}

\subsection{Graph-level Extrapolation on Graphs with Different Structure Types (Related to  Section \ref{sec:exp:graph})}
\label{sec:app:exp:structures}

We trained a GNN using graphs of one structure type at a time then measured extrapolation error on the other structure types. In Table \ref{tab:heterogeneous_all}, each row denotes the test MAPEs of the model trained using the same graph. While the model trained using \texttt{ER} graphs or \texttt{BA} graphs extrapolated well on all three tasks, the model trained using the other graphs showed poor extrapolation performance. According to \citet{xu2021neural}, the distribution of training graphs can affect the extrapolation performance, and this can be one of the possible reasons why the model trained using \texttt{4regular}, \texttt{expander}, \texttt{tree}, \texttt{ladder} graphs showed poor extrapolation performance.

\subsection{Graph-level Extrapolation on Graphs with Different Node Feature Distributions (Related to Section \ref{sec:exp:graph})}
\label{sec:app:exp:nodefeats}

In the paper, we investigated the extrapolation performances in graph-level and node-level tasks on graphs with different sizes and structures. We also performed experiments on graphs with different edge feature distributions for the \texttt{shortest} task.

We additionally performed graph-level experiments for testing extrapolation to out-of-distribution node features. As in \citet{xu2021neural}, 3-dimensional node features drawn from $U(0, 5)$ were used in training and validation data, and those drawn from $U(0, 10)$ were used in test data.

\begin{table}[t]
    \centering
    \caption{Test error on graph-level tasks with different node feature distributions.}
    \scalebox{0.85}{
    \begin{tabular}{cccc}
        \toprule
        Task & \texttt{invsize} & \texttt{harmonic} &  \texttt{maxdegree} \\
        \midrule
        Test MAPE & 0.8$\pm$0.6 & 2.4$\pm$1.7 & 4.7$\pm$1.4 \\
        \bottomrule
    \end{tabular}
    }
    \label{tab:nodefeat}
\end{table}

\begin{table}[t]
    \centering
    \caption{Test error on graph-level tasks with different activation functions.}
    \scalebox{0.85}{
    \begin{tabular}{cccc}
        \toprule
        Task & \texttt{invsize} & \texttt{harmonic} &  \texttt{maxdegree} \\
        \midrule
        ReLU & 0.7$\pm$0.5 & 5.1$\pm$0.9 & 7.3$\pm$2.1 \\
        LeakyReLU & 0.3$\pm$0.2 & 4.6$\pm$1.4 & 5.4$\pm$1.6 \\
        ELU & 0.2$\pm$0.2 & 5.8$\pm$0.7 & 6.2$\pm$3.1 \\
        \bottomrule
    \end{tabular}
    }
    \label{tab:act}
\end{table}

We reported the test error in Table \ref{tab:nodefeat}. As shown in the table, the error was slightly larger than that in the original settings without node features. However, the error was still reasonably low, and \method outperformed baseline approaches especially on the \texttt{invsize} and \texttt{harmonic} tasks.

\subsection{Graph-level Extrapolation with Various Activation Functions \\ (Related to Section \ref{sec:exp:graph})}
\label{sec:app:exp:activations}

We performed an additional graph-level experiment with a variant of GNP for handling negative inputs and a wider range of activation functions. Since the original \method can only take non-negative inputs, we replaced ReLU to the absolute function for processing the inputs and then used an activation function. We considered ReLU, ELU, and LeakyReLU as the activation function. We compared the extrapolation error in each setting in Table \ref{tab:act}, and GNP with the aforementioned changes showed performance comparable to original GNP.

\subsection{Test MAPE on Graph-level Tasks (Related to Section \ref{sec:exp:graph})}
\label{sec:app:exp:allresults}

In Table \ref{tab:exp:graph}, we reported test MAPEs and standard deviations for all $19$ competitors and \method on the graph-level tasks.

\begin{table}[h]
  \caption{Extrapolation performances on three graph-level tasks. %
  We reported test MAPEs and standard deviations.
  Near-perfect scores are in bold, and scores significantly better than those in completely failed cases are underlined.
  }
  \centering
  \begin{subtable}{\linewidth}
  \centering
  \scalebox{0.85}{
  \begin{tabular}{c|cccc||c|c}
    \toprule
    \multirow{2}{*}{Readout} & \multicolumn{4}{c||}{Aggregation} & \multirow{2}{*}{Readout} & \multirow{2}{*}{Test MAPE} \\
    \cmidrule(r){2-5}
    & sum & max & mean & min & & \\
    \midrule
    sum  & 376.1$\pm$378.0 & 257.1$\pm$351.3 & 257.1$\pm$351.3 & 257.1$\pm$351.3 & SortPool  & 100.0$\pm$0.0 \\ 
    max  & 101.0$\pm$7.6 & 179.1$\pm$44.2 & 179.1$\pm$44.2 & 179.1$\pm$44.2 &  Set2Set  & 198.8$\pm$0.5 \\
    mean & 116.9$\pm$6.6 & 179.9$\pm$44.7 & 179.1$\pm$44.2 & 179.9$\pm$44.7 & SAGPool & 178.7$\pm$10.4\\
    min  & 139.6$\pm$54.2 & 179.1$\pm$44.2 & 179.1$\pm$44.2 & 179.1$\pm$44.2 & \textbf{\method} & \textbf{1.2$\pm$0.3}\\
    \bottomrule
  \end{tabular}
  }
  \caption{\texttt{invsize}}
  \end{subtable}
  \begin{subtable}{\linewidth}
  \centering
  \scalebox{0.85}{
  \begin{tabular}{c|cccc||c|c}
    \toprule
    \multirow{2}{*}{Readout} & \multicolumn{4}{c||}{Aggregation} & \multirow{2}{*}{Readout} & \multirow{2}{*}{Test MAPE} \\
    \cmidrule(r){2-5}
    & sum & max & mean & min & & \\
    \midrule
    sum  & { }109.6$\pm$19.0{ } & { }121.1$\pm$28.9{ } & { }151.5$\pm$73.1{ } & { }121.1$\pm$28.9{ } & SortPool  & { }76.8$\pm$13.0{ } \\
    max  & 73.0$\pm$2.3 & 76.0$\pm$3.6 & 76.4$\pm$3.5 & 76.0$\pm$3.6 &  Set2Set  & 78.2$\pm$4.0 \\
    mean & 95.7$\pm$9.5 & 75.9$\pm$3.6 & 76.3$\pm$3.5 & 75.9$\pm$3.6 & SAGPool & 26.9$\pm$21.0 \\
    min  & 91.1$\pm$12.2 & 76.0$\pm$3.6 & 76.3$\pm$3.5 & 76.0$\pm$3.6 &  \textbf{\method} & \textbf{1.1$\pm$0.8}  \\
    \bottomrule
   \end{tabular}
  }
  \caption{\texttt{harmonic}}
  \end{subtable}
  \begin{subtable}{\linewidth}
  \centering
  \scalebox{0.85}{
  \begin{tabular}{c|cccc||c|c}
    \toprule
    \multirow{2}{*}{Readout} & \multicolumn{4}{c||}{Aggregation} & \multirow{2}{*}{Readout} & \multirow{2}{*}{Test MAPE} \\
    \cmidrule(r){2-5}
    & sum & max & mean & min & & \\
    \midrule
    sum  & { }{ }60.5$\pm$22.1{ }{ } & { }{ }{ }50.5$\pm$2.1{ }{ }{ } & { }{ }{ }49.9$\pm$0.5{ }{ }{ } & { }{ }{ }50.5$\pm$2.1{ }{ }{ } & SortPool  & { }\textbf{0.0$\pm$0.0}{ } \\
    max  & \textbf{0.0$\pm$0.0} & 59.7$\pm$0.3 & 59.7$\pm$0.3 & 59.7$\pm$0.3 & Set2Set  & \textbf{0.0$\pm$0.0} \\
    mean & \underline{16.3$\pm$2.4} & 59.7$\pm$0.3 & 59.7$\pm$0.3 & 59.7$\pm$0.3 & SAGPool  & { }{ }51.4$\pm$1.8{ }{ } \\
    min  & 25.5$\pm$3.2 & 59.7$\pm$0.3 & 59.7$\pm$0.3 & 59.7$\pm$0.3 & \textbf{\method} & \textbf{2.5$\pm$0.4}  \\
    \bottomrule
  \end{tabular}
   }
  \caption{\texttt{maxdegree}}
  \end{subtable}
  \label{tab:exp:graph}
\end{table}

\begin{table}[]
    \centering
    \caption{Test error with different masking schemes}
    \scalebox{0.85}{
    \begin{tabular}{cccc}
        \toprule
        Task & without masking & masking $\method^{+}$ & masking $\method^{-}$ \\
        \midrule
        \texttt{invsize} & \textbf{1.2$\pm$0.3} & \textbf{1.1$\pm$0.1} & 99.6$\pm$1.0 \\
        \texttt{harmonic} & \textbf{1.1$\pm$0.8} & \textbf{1.0$\pm$0.7} & 100.1$\pm$0.1 \\
        \texttt{maxdegree} & \textbf{2.5$\pm$0.4} & 100.0$\pm$0.2 & \textbf{2.5$\pm$0.4} \\
        \bottomrule
    \end{tabular}
    }
    \label{tab:behavior_app}
\end{table}

\subsection{Behaviors of $\method^{+}$ and $\method^{-}$ for Graph-level Tasks \\ (Related to Section \ref{sec:exp:behavior})}
\label{sec:app:exp:behaviors}

We analyzed the behavior of the negative \method on three graph-level tasks that we performed in the paper. In all experiments, we found that either $\method^{+}$ or $\method^{-}$ tends to dominate the other side. To validate the observation, we masked the output of $\method^{+}$ and $\method^{-}$ for readout to 0 on the graph-level tasks.

As seen in Table \ref{tab:behavior_app}, masking the output of $\method^{-}$ on the maxdegree task and masking the output of $\method^{+}$  on the other tasks do not significantly affect the extrapolation performance. When we masked the opposite part of \method, however, the test MAPE was near 100. These results imply that the effect of the dominated part on the output of the model is negligible. That is, when the optimal pooling function is max, the negative \method has almost no effect on determining the output. Similarly, when the optimal function is \method with $(p, q) = (-1, 0)$, the positive \method has almost no effect on determining the output.

\section{Closed-form Solutions for Set-related Tasks \\ (Related to Section \ref{sec:exp:set})}

In Table \ref{tab:exp:set_gt}, we provided the closed-form solutions for each task. 
\begin{table}[h]
    \centering
    \caption{A closed-form solution for each task when the input set $S = \{x_1, x_2, \cdots, x_n\}$ is given.}
    \begin{tabular}{ll}
        \toprule
        Task & Closed form solution \\
        \midrule
        $\mu_{\text{post}}$ & $\left(\frac{1}{\sigma_0^2} + \frac{n}{\sigma^2}\right)^{-1} \cdot \left(\frac{\mu_0}{\sigma_0^2} + \frac{1}{\sigma^2}\sum_{i=1}^{n}x_i\right)$\\
        $\sigma^2_{\text{post}}$ & $\left(\frac{1}{\sigma_0^2} + \frac{n}{\sigma^2}\right)^{-1}$ \\
        $\hat{\mu}_{\text{MAP}}$ & $\left(\frac{1}{\sigma_0^2} + \frac{n}{\sigma^2}\right)^{-1} \cdot \left(\frac{\mu_0}{\sigma_0^2} + \frac{1}{\sigma^2}\sum_{i=1}^{n}x_i\right)$ \\
        $\hat{\sigma}^2_{\text{MAP}}$ & $\left(\alpha+\frac{n}{2}+1\right)^{-1} \cdot \left(\beta+\frac{1}{2}\sum_{i=1}^{n}(x_i-\mu)^2\right)$\\
        \bottomrule
    \end{tabular}
    \label{tab:exp:set_gt}
\end{table}

\section{Details about Influence Maximization and MONSTOR \\ (Related to Section \ref{sec:exp:graphcl})}
\label{sec:app:im}
Influence Maximization (IM) \citep{kempe2003maximizing} is one of the most extensively studied NP-hard problems on social networks due to its practical applications in viral marketing and computational epidemiology. The goal of the problem is to choose a given number of seed nodes (i.e., a set of initially activated nodes) that maximize the influence through a given graph under a diffusion model. In this experiment, we used the Independent Cascade (IC) model as the diffusion model. In the IC model, each link $(u, v)$ has an activation probability $p_{uv}$. When a node $u$ is newly activated and a neighbor $v$ is not activated yet, the node $u$ has exactly one chance to activate the node $v$ with the probability $p_{uv}$, and the diffusion process ends when every activated node fails to activate any new node. In the model, the influence is the number of activated nodes after the diffusion process ends.

MONSTOR estimates the influence given a graph and a seed set. To train the model, we generated a dataset consisting of pairs of an input graph and a set of randomly chosen seed nodes. To generate ground-truth answers, we ran $10,000$ Monte-Carlo simulations and recorded the probability $\pi_{u,i}$ that each node $u$ is activated until the $i$-th step. We first trained the base model $M$ to estimate $\pi_i$ given $\pi_{i-1}$, \ldots , $\pi_{i-d}$. 
MONSTOR is constructed by stacking $s$ times the base model $M$, and $s$ is chosen to minimize squared loss between the ground-truth influences and the estimated influences on the validation set. Since influence maximization is a submodular maximization problem, we used UBLF \citep{zhou2013ublf} or CELF \citep{leskovec2007cost} equipped with MONSTOR, which greedily selects seed nodes.

\section{Code \& Data}
All assets used in the paper, including the training/evaluation code and the trained models with \method, are contained in the supplemental material. %
All assets we used from DGL\footnote{\url{https://github.com/dmlc/dgl}} \citep{wang2019dgl} and Pytorch Geometric\footnote{\url{https://github.com/rusty1s/pytorch_geometric}} \citep{Fey/Lenssen/2019} are available under the Apache license $2.0$ and MIT license, respectively. The implementation of Set Transformer\footnote{\url{https://github.com/juho-lee/set_transformer}} \citep{lee2019set} that we used is available under the MIT License. The implementation of ASAPool\footnote{\url{https://github.com/malllabiisc/ASAP}} \citep{ranjan2020asap} that we used is available under the Apache license $2.0$. For the other assets, we were unable to find their licenses. For the SAGPool \citep{lee2019self} implementation in Pytorch Geometric, the dataset generators for the graph-level and node-level tasks \citep{xu2021neural}, and the MONSTOR \citep{ko2020monstor} implementation in DGL, we used the code on the GitHub repositories\footnote{\url{https://github.com/inyeoplee77/SAGPool}}${}^,$\footnote{ \url{https://github.com/jinglingli/nn-extrapolate}}${}^,$\footnote{ \url{https://github.com/jihoonko/asonam20-monstor}} shared by the authors of the original papers. 
We accessed TUDataset\footnote{\url{https://chrsmrrs.github.io/datasets/}} \citep{morris2020tu} using PyTorch Geometric.

\end{document}